\documentclass{article}
\usepackage[preprint]{neurips_2026} 

\newif\ifchecklist \checklistfalse   

\usepackage[T1]{fontenc}
\usepackage{amsmath, amssymb, amsthm}
\usepackage{hyperref} 
\usepackage{url}
\usepackage{booktabs}
\usepackage{xcolor}
\usepackage{bm}
\usepackage{multirow}
\usepackage{enumitem}
\usepackage{graphicx}
\usepackage{algorithm}
\usepackage{algorithmic}
\usepackage{subcaption}
\newtheorem{proposition}{Proposition}
\newtheorem{remark}{Remark}

\NewDocumentCommand \prArg{mm}
{(
\IfNoValueTF{#2}{#1}{#1 \mid #2}
)}

\NewDocumentCommand \newProbabilityFormat{r<>m}
{
	\NewDocumentCommand #1 {e{_}e{^}>{\SplitArgument{1}{|}}d()}
	{
		\IfNoValueTF{##1}
		{
			\IfNoValueTF{##2}
			{\IfNoValueTF{##3}{#2}{#2\prArg##3}}
			{\IfNoValueTF{##3}{#2^{##2}}{#2^{##2}\prArg##3}}
		}
		{
			\IfNoValueTF{##2}
			{\IfNoValueTF{##3}{#2_{##1}}{#2_{##1}\prArg##3}}
			{\IfNoValueTF{##3}{#2_{##1}^{##2}}{#2_{##1}^{##2}\prArg##3}}
		}
	}
}

\NewDocumentCommand \fProbability {m} {#1}
\NewDocumentCommand \fVector {m} {\boldsymbol{#1}}
\NewDocumentCommand \fMatrix {m} {\boldsymbol{#1}}

\NewDocumentCommand \fFunction {m} {{#1}}
\NewDocumentCommand \fSet {m} {\mathcal{#1}}

\NewDocumentCommand \newScalar{r<>m}
{
	\NewDocumentCommand #1 {} {{#2}}
}

\NewDocumentCommand \newVector{r<>m}
{
	\NewDocumentCommand #1 {} {\fVector{#2}}
}

\NewDocumentCommand \newMatrix{r<>m}
{
	\NewDocumentCommand #1 {} {\fMatrix{#2}}
}

\NewDocumentCommand \newProbability{r<>m}
{
	\NewDocumentCommand #1 {} {\fProbability{#2}}
}

\NewDocumentCommand \newFunction{r<>m}
{
	\NewDocumentCommand #1 {} {\fFunction{#2}}
}

\NewDocumentCommand \newSet{r<>m}
{
	\NewDocumentCommand #1 {} {\fSet{#2}}
}

\newScalar<\eps>{\varepsilon}

\newVector<\va>{a}
\newVector<\vb>{b}
\newVector<\vc>{c}
\newVector<\vd>{d}
\newVector<\ve>{e}
\newVector<\vf>{f}
\newVector<\vg>{g}
\newVector<\vh>{h}
\newVector<\vi>{i}
\newVector<\vj>{j}
\newVector<\vk>{k}
\newVector<\vl>{\ell}
\newVector<\vm>{m}
\newVector<\vn>{n}
\newVector<\vo>{o}
\newVector<\vp>{p}
\newVector<\vq>{q}
\newVector<\vr>{r}
\newVector<\vs>{s}
\newVector<\vt>{t}
\newVector<\vu>{u}
\newVector<\vv>{v}
\newVector<\vw>{w}
\newVector<\vx>{x}
\newVector<\vy>{y}
\newVector<\vz>{z}
\newVector<\vAlpha>{\alpha}
\newVector<\vBeta>{\beta}
\newVector<\vGamma>{\gamma}
\newVector<\vDelta>{\delta}
\newVector<\vEpsilon>{\varepsilon}
\newVector<\vEps>{\varepsilon}
\newVector<\vZeta>{\zeta}
\newVector<\vEta>{\eta}
\newVector<\vTheta>{\theta}
\newVector<\vTh>{\theta}
\newVector<\vKappa>{\kappa}
\newVector<\vLambda>{\lambda}
\newVector<\vMu>{\mu}
\newVector<\vNu>{\nu}
\newVector<\vXi>{\xi}
\newVector<\vPi>{\pi}
\newVector<\vRho>{\rho}
\newVector<\vSigma>{\sigma}
\newVector<\vTau>{\tau}
\newVector<\vUpsilon>{\upsilon}
\newVector<\vPhi>{\varphi}
\newVector<\vChi>{\chi}
\newVector<\vPsi>{\psi}
\newVector<\vOmega>{\omega}

\newMatrix<\mA>{A}
\newMatrix<\mB>{B}
\newMatrix<\mC>{C}
\newMatrix<\mD>{D}
\newMatrix<\mE>{E}
\newMatrix<\mF>{F}
\newMatrix<\mG>{G}
\newMatrix<\mH>{H}
\newMatrix<\mI>{I}
\newMatrix<\mJ>{J}
\newMatrix<\mK>{K}
\newMatrix<\mL>{L}
\newMatrix<\mM>{M}
\newMatrix<\mN>{N}
\newMatrix<\mP>{P}
\newMatrix<\mQ>{Q}
\newMatrix<\mR>{R}
\newMatrix<\mS>{S}
\newMatrix<\mT>{T}
\newMatrix<\mU>{U}
\newMatrix<\mV>{V}
\newMatrix<\mW>{W}
\newMatrix<\mX>{X}
\newMatrix<\mY>{Y}
\newMatrix<\mZ>{Z}
\newMatrix<\mGamma>{\Gamma}
\newMatrix<\mDelta>{\Delta}
\newMatrix<\mTheta>{\Theta}
\newMatrix<\mLambda>{\Lambda}
\newMatrix<\mXi>{\Xi}
\newMatrix<\mPi>{\Pi}
\newMatrix<\mSigma>{\Sigma}
\newMatrix<\mUpsilon>{\Upsilon}
\newMatrix<\mPhi>{\Phi}
\newMatrix<\mPsi>{\Psi}
\newMatrix<\mOmega>{\Omega}

\newSet<\sA>{A}
\newSet<\sB>{B}
\newSet<\sC>{C}
\newSet<\sD>{D}
\newSet<\sE>{E}
\newSet<\sF>{F}
\newSet<\sG>{G}
\newSet<\sH>{H}
\newSet<\sI>{I}
\newSet<\sJ>{J}
\newSet<\sK>{K}
\newSet<\sL>{L}
\newSet<\sM>{M}
\newSet<\sN>{N}
\newSet<\sP>{P}
\newSet<\sQ>{Q}
\newSet<\sR>{R}
\newSet<\sS>{S}
\newSet<\sT>{T}
\newSet<\sU>{U}
\newSet<\sV>{V}
\newSet<\sW>{W}
\newSet<\sX>{X}
\newSet<\sY>{Y}
\newSet<\sZ>{Z}

\title{Dense Structural Compression of Transformers via Gauge-Correct Channel Removal}
\author{
Jed A. Duersch\thanks{Corresponding author.} \quad
Na\"im Es-Sebbani \quad
Nathana\"el Haas \quad
Zied Bouraoui \\
Universit\'e d'Artois, CNRS, CRIL UMR 8188, Lens, France
}

\newcommand{\ffrelSeeds}{25}

\newcommand{\ffrelTarget}{0.005}
\newcommand{\ffrelInitialFMA}{13{,}179{,}850}

\newcommand{\ffrelPubConverged}{18}
\newcommand{\ffrelPubNeverComp}{2}
\newcommand{\ffrelPubDegraded}{5}

\newcommand{\ffrelPubCompression}{187}

\newcommand{\ffrelPubReachMedian}{2400}
\newcommand{\ffrelPubReachMin}{1700}
\newcommand{\ffrelPubReachMax}{17100}
\newcommand{\ffrelPubNeverReached}{2}

\newcommand{\ffrelHalfConverged}{20}

\newcommand{\ffrelHalfDegraded}{5}

\newcommand{\ffrelHalfCompression}{176}

\newcommand{\ffrelHalfReachMedian}{1700}
\newcommand{\ffrelHalfReachMin}{1200}
\newcommand{\ffrelHalfReachMax}{3300}

\newcommand{\ffrelRateFisherP}{0.74}
\newcommand{\ffrelDegradedFisherP}{1.00}

\newcommand{\ffrelReachP}{< 0.001}

\newcommand{\ffrelConvFMAP}{0.059}

\newcommand{\perlayerCeilA}{$D{=}576$, $H{=}6$, $d_k{=}48$, $d_v{=}96$, $d_f{=}1536$; discovered $D{=}571$}
\newcommand{\perlayerCeilB}{$D{=}480$, $H{=}8$, $d_k{=}45$, $d_v{=}90$, $d_f{=}1920$; discovered $D{=}480$}

\begin{document}
\maketitle

\begin{abstract}
Inference energy per token drives the cost and carbon footprint of deployed transformers. It is dominated by dense matrix products that incur fused multiply-accumulate (FMA) operations and memory traffic. To reduce these computations while retaining dense tensors for high GPU throughput, we develop a methodology from first principles to adapt structural complexity during training to maximize inference utility per unit compute. Channel penalties drive entire tensor slices to zero to enable physical removal while preserving density and the network function.

The natural approach, penalizing the norm of operator components acting through each channel, is provably destabilized by gauge freedom. We resolve this pathology with GaugeLasso: additive symmetric group-lasso penalties that recover a monotone function of product-norms when the network converges to gauge balance. Our equilibrium analysis enables per-channel calibration to correctly suppress slices that under-perform in inference utility per unit compute.

Under adaptive pressure, the network reorganizes into depth-dependent structural profiles that can be far smaller than the architecture required to learn the task. On polynomial long division over $\mathbb{F}_{31}$, compute compresses from 148 to 255 times with perfect accuracy. On character-level language modeling, compressed models outperform the hand-designed baseline at equal FMA. On masked autoencoding, a compression trial exposes which axes were over-provisioned and which saturated, guiding a better second design. Compaction also accelerates training monotonically (final steps $1.6$ to $5\times$ faster) as the model progresses. Post-hoc pruning with the same utility ranking cannot reach these structures, showing that sustained pressure is central to discovery of efficient models. Retraining a discovered architecture recovers baseline quality on our statistical tasks, but fails on our exact algorithmic task.
\end{abstract}


\section{Introduction}
\label{sec:intro}

The energy required to process a token drives both the environmental footprint and operating costs of a deployed transformer.
Nearly all of that energy is spent in dense matrix-matrix products.
Attention and MLP projections use one fused multiply-accumulate (FMA) per parameter
per token (two FLOPs by convention),
and attention adds a further term that grows with context length.
Reducing inference energy means reducing this arithmetic, and the memory traffic
that feeds it, while preserving the predictable access patterns that make dense GPU kernels efficient.

Transformers used in practice simplify design by using structurally uniform layers
that contain equivalent processing dimensions.
Although a block at depth~1 may serve a fundamentally different role than a block at depth~12,
both are allocated identical resources
because the space of heterogeneous depth-dependent architectures is far too large to search by enumeration.
Moreover, the structure needed to discover a solution can be far larger
than the structure needed to perform inference,
leaving the trained network wasteful.

We develop a methodology to dynamically adapt network structure during training,
compelling operators to reorganize into smaller dense tensors that retain full GPU throughput.%
\footnote{Code: \url{https://gitlab.com/jduersch/gaugelasso}}
Calibrated channel-level penalties drive entire tensor slices to zero on both sides of every structural axis,
enabling physical removal of channels without introducing sparsity.
We price each channel in inference utility per FMA, rather than just parameters,
to account for the additional compute used by key and value channels.
Further, FMA is a kernel- and hardware-independent model property
that we can optimize at design time without referencing a deployment target.
Prior approaches to efficient architectures, including post-hoc structured pruning, neural architecture search,
and during-training methods such as CoFi~\citep{xia2022structured},
either operate on fixed representations, search discrete candidate sets,
or handle multiplicative coupling between connected factors heuristically.

The natural approach is to penalize the operator norm acting through each channel, but this is provably pathological.
Transformers contain \emph{gauge-connected} tensors:
rescaling one factor and inversely rescaling the other preserves the network function.
This freedom destabilizes product-norm penalties,
driving one factor toward zero while the other compensates until updates become too sensitive.
GaugeLasso resolves this problem with efficiency-scaled additive symmetric penalties
on channel-aligned factors, which drive the network toward a particular gauge balance
where the group norms measure inference utility per FMA.
This suppresses channels with under-performing utility per cost.
Applied to diverse tasks, GaugeLasso discovers architectures
far smaller than those needed for learning.
Figure~\ref{fig:intro} traces this process on polynomial long division over~$\mathbb{F}_{31}$,
where layers adopt markedly different dimensions during compression.
The method achieves up to $255{\times}$ compression (median $187{\times}$) over seeds that maintain perfect accuracy.
For character-level language modeling,
compressed models outperform the hand-designed baseline at equal FMA,
while continuous dense compaction accelerates training by up to~$5{\times}$.

Structural regularization during training allows the network to adopt more efficient representations
as it is forced into fewer dimensions, rather than merely removing redundancies.
We test this directly by applying the same gauge-calibrated utility metric to trim fully trained models.
On~$\mathbb{F}_{31}$, post-hoc pruning cannot compress the model at all;
on character-level language modeling, it achieves 22\% worse perplexity at equal FMA.
A matched comparison against CoFi, the closest during-training method,
stalls at $1.97{\times}$ compression where $7.09{\times}$ was requested,
plausibly because its mask space cannot narrow dimensions inside units that remain alive
(Section~\ref{sec:posthoc}).
Sustained resource pressure is therefore what discovers these architectures.
Whether it is additionally required to \emph{train} them depends on the task:
retraining a discovered architecture from fresh initialization
recovers baseline quality on our statistical tasks, but not on exact algorithmic execution.

Our contributions are the following.
\begin{enumerate}[leftmargin=*,itemsep=2pt]
\item \textbf{Symmetric group-lasso equilibrium and inference utility.}
At gauge balance, group norms directly measure each channel's inference utility.
Efficiency-scaled calibration normalizes to utility per FMA,
enabling gradual suppression of under-performing channels across structural axes (Section~\ref{sec:theory}).

\item \textbf{Dense adaptive training framework.}
Our implementation techniques
(adaptive penalty strength from a loss target,
gauge-Sinkhorn iterations to accelerate balance,
and optimizer-preserving tensor surgery)
yield continuous dense compression without sparsity masks, discrete search, or optimization restarts.
Because compressed dimensions are physically removed, step time decreases as the model compresses (Section~\ref{sec:training}).

\item \textbf{Sustained pressure discovers architectures that search does not reach.}
Transformers reorganize into smaller dense architectures that match or exceed baselines,
and neither post-hoc pruning with the same utility metric nor a matched
during-training baseline reaches them.
Retraining from fresh initialization recovers baseline quality on our statistical
tasks but fails on exact algorithmic execution.
Structural profiles also reveal when axes saturate their initial budget,
directly informing architecture design (Section~\ref{sec:experiments}).
\end{enumerate}

\begin{figure}[t]
\centering
{\small\textbf{Polynomial long division over $\mathbb{F}_{31}$: structural profiles and compression trajectory.}}
\includegraphics[width=\textwidth]{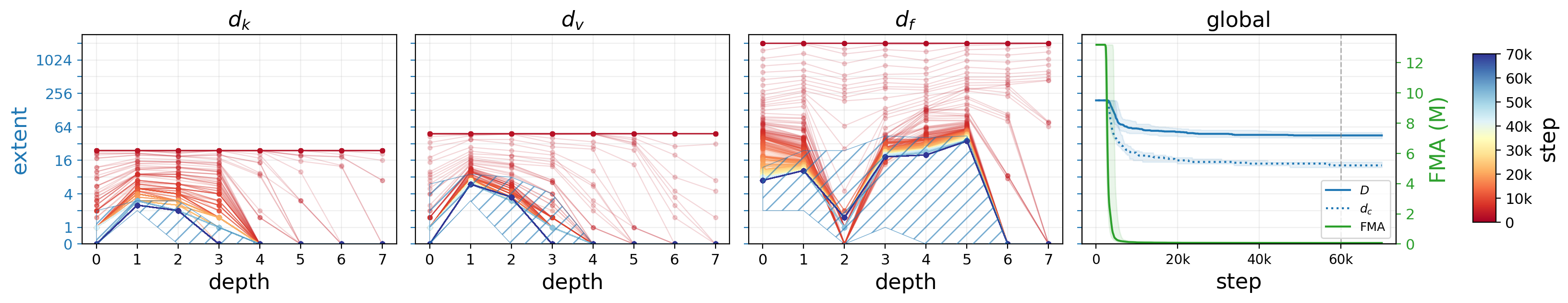}
\caption{An 8-layer transformer trained on polynomial long division over~$\mathbb{F}_{31}$
discovers specialized depth structure under compute pressure
($L^*{=}0.005$, 25~seeds, 18 converged; see Section~\ref{sec:exp_ff31}).
Median trends are shown in 3 of 7 adaptive structural axes. See Section~\ref{sec:setting} for details and full profiles in Appendix~\ref{app:exp_details}.
The hatched band spans the middle 78\%.
The right panel tracks the embedding dimension~$D$ and classifier input~$d_c$ along with inference compute.}
\label{fig:intro}
\end{figure}
\section{Related Work}
\label{sec:related}

\paragraph{Structured pruning.}
Post-hoc methods follow a train-prune-finetune pipeline:
magnitude pruning~\citep{han2015learning}, structured extensions
to filters and channels~\citep{li2017pruning,
molchanov2019importance}, and the lottery ticket
hypothesis~\citep{frankle2019lottery}.
Prior work has pruned transformers at multiple granularities,
including entire layers~\citep{fan2019reducing, sajjad2023effect},
attention heads~\citep{michel2019sixteen}, and intermediate
dimensions~\citep{hou2020dynabert}, each requiring a
dedicated mechanism.
Our framework achieves all of these through a single
efficiency-calibrated penalty: when all channels on an axis
are dropped, the owning operator or block is removed
automatically.
For large language models,
Sheared LLaMA~\citep{xia2023sheared} combines structured pruning
with continued pre-training but requires the target architecture
to be specified in advance;
SliceGPT~\citep{ashkboos2024slicegpt} uses computational
invariances to delete rows and columns post-hoc, producing
dense output like GaugeLasso but operating on fixed
representations.

During-training methods apply sparsity pressure throughout
optimization.
Movement pruning~\citep{sanh2020movement} learns which weights to
prune from gradient direction;
gradual magnitude pruning~\citep{zhu2017prune} follows a
sparsity schedule.
CoFi~\citep{xia2022structured} is the closest prior work: it
prunes transformers at multiple granularities using learned masks,
managing the resulting mask-weight coupling with straight-through
estimation and Lagrangian scheduling.
This coupling is the same gauge freedom we analyze in
Section~\ref{sec:theory};
CoFi handles it heuristically, while our additive penalty
resolves it analytically, yielding calibrated per-channel utility
at equilibrium.
CoFi maintains separate mask variables per granularity,
whereas our penalties handle all structural axes and
produce fully dense tensors rather than sparse masks.
Knowledge distillation~\citep{sanh2019distilbert, jiao2020tinybert}
is complementary: it requires a target architecture, which our
framework can discover.

\paragraph{Group lasso and scale invariance.}
Group lasso~\citep{yuan2006model} provides coordinated
suppression of parameter groups;
\citet{wen2016learning} applied it to deep networks.
The rescaling invariance between multiplicatively-coupled layers
is well known:
\citet{dinh2017sharp} showed that sharp minima
become flat under rescaling in batch-normalized networks, and
path-norm regularization~\citep{neyshabur2015norm}
attempted to regularize in a scale-invariant manner.
Equi-normalization~\citep{stock2019equi} is closely related,
using Sinkhorn-Knopp iterations to minimize the $L_2$ norm of the weights to improve training.
In contrast, our gauge Sinkhorn balances a weighted group norm,
so the balance point encodes compute cost rather than weight magnitude.
Having proved that gauge invariance destabilizes
\emph{every} monotone penalty on the product norm
(Appendix~\ref{app:failure_modes}), we resolve the pathology
constructively: additive group-lasso penalties drive the network to a
stable gauge equilibrium, at which the group norms are gauge-invariant
and therefore support consistent utility estimates, efficiency-scaled
calibration, and a drop criterion that requires no loss extrapolation.

\paragraph{Architecture search and scaling.}
NAS~\citep{zoph2017nas} and differentiable variants like
DARTS~\citep{liu2019darts} search over discrete architectural
candidates.
DynaBERT~\citep{hou2020dynabert} trains sub-networks of
varying width and depth simultaneously but selects from
predefined width multipliers rather than discovering continuous
structure.
Our framework lets continuous optimization discover the
architecture from the interaction of task gradients and penalty
pressure, without a predefined search space.
Compute-optimal scaling laws~\citep{kaplan2020scaling,
hoffmann2022training} allocate a fixed budget across model size
and training tokens for a given global architecture.
Our work addresses a complementary question: given a loss target,
what is the minimal-compute \emph{internal} structure that
sustains it?

\section{Structural Regularization}
\label{sec:theory}

We derive the penalty criterion used to suppress channels during training.
Section~\ref{sec:setting} defines the compressible axes.
Section~\ref{sec:axes} shows how additive group-lasso penalties
resolve the gauge freedom between coupled factors
by driving channels toward gauge balance.
Section~\ref{sec:calibration} calibrates the resulting balance by freed compute
to obtain a common utility-per-FMA criterion.

\subsection{Transformer Setting}
\label{sec:setting}

We target the transformer family underlying LLaMA and its descendants:
SwiGLU MLPs~\citep{shazeer2020glu} with RMS normalization.
Because standard RMS normalization includes all channels in its denominator,
suppressing a channel discontinuously rescales the rest;
we introduce a \emph{prescaled} variant that eliminates this discontinuity (Appendix~\ref{app:prewrms}).

Each transformer block contains an attention sub-block and a SwiGLU MLP sub-block.
Each sub-block reads from the residual stream of width~$D$ through an extractor
(selecting $d_{ai}$ or $d_{mi}$ channels)
and writes back through an injector
(routing $d_{ao}$ or $d_{mo}$ channels to the residual stream).
Attention uses query, key, value, and output
projections~$W_q, W_k, W_v, W_o$,%
\footnote{Operators act from the left on column vectors:
$\vq = \mW_q \vx$, stored as $\mQ = \mX \mW_q^\top$.}
operating on~$d_k$ and~$d_v$ dimensions per head across~$H$ heads.
The SwiGLU MLP uses up, gate, and down projections~$W_u, W_g, W_d$,
mapping through filter dimension~$d_f$.

The structural axes of interest are the seven per-block dimensions
($d_{ai}$, $d_k$, $d_v$, $d_{ao}$, $d_{mi}$, $d_f$, $d_{mo}$), each independently adjustable per
block, and two global axes: the residual stream width~$D$ and the classifier input width $d_c$.
We introduce a prescaled RMS normalization (see Appendix~\ref{app:prewrms}) with each extractor
and learnable post-scaling on each injector
to ensure that channels exit smoothly under regularization pressure.
Appendix~\ref{app:axis_inventory} provides the full axis inventory and penalty assignments.

\subsection{Group Lasso and Gauge Balance}
\label{sec:axes}

Consider the value dimension~$d_v$ within an attention head.
The value projection~$W_v$ creates a $d_v$-dimensional representation
and the output projection~$W_o$ maps it back to the residual stream.
Rescaling the $j$-th column of~$W_v$ by~$\alpha$
and the $j$-th column of~$W_o$ by~$1/\alpha$
preserves the product~$W_o W_v$.
Although a penalty on $\|W_v\| \cdot \|W_o\|$ would be gauge invariant,
regularization pressure amplifies any norm imbalance making it unstable.
Appendix~\ref{app:failure_modes} examines this and related pathologies in detail.
This problematic \emph{gauge freedom} appears throughout transformers,
wherever multiplicatively coupled factors share a structural channel.

Group lasso~\citep{yuan2006model} provides the resolution.
Rather than penalizing the product norm, we penalize each factor
separately with a Frobenius-norm group-lasso term that drives all
parameters in a structural slice toward zero simultaneously.
A \emph{structural slice}~$T_j$ collects the parameters removed
when dropping channel~$j$ within one factor.
The regularized loss is
$\mathcal{L}_{\mathrm{reg}}
  = L + \textstyle\sum_{p,j} \lambda_p(j)\, \|T_j^{(p)}\|_F$,
where $\lambda_p(j) > 0$ is the penalty weight for slice~$j$ of
factor~$p$.
On an axis where multiple tensors share a gauge (e.g., the
ensemble $\{W_q, W_k, W_v\}$ on~$d_{ai}$),
their parameters form a single lasso group penalized through a
joint Frobenius norm.

The \emph{utility}~$\omega_j$ of a structural slice is the
first-order task-loss increase from removing it.
At a critical point, utility equals penalty:

\begin{proposition}[Equilibrium identity]
\label{prop:equilibrium}
Suppose the lasso groups form a disjoint partition of the
penalized parameters.
For any structural slice~$T_j$ with $\|T_j\|_F > 0$ and penalty
$\Omega = \lambda\|T_j\|_F$, at a critical point the utility
of~$T_j$ equals its penalty:
$\omega_j = \lambda\|T_j\|_F$.
\end{proposition}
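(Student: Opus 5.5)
The approach is to read the utility $\omega_j$ as the first-order change in $L$ along the removal direction and to combine it with stationarity of $\mathcal{L}_{\mathrm{reg}}$ restricted to the ray $t\mapsto tT_j$. Removing slice $j$ means moving from $T_j$ to $0$ with all other parameters fixed. To first order, the task-loss increase is therefore
\[
\omega_j = L(\theta - T_j) - L(\theta) \approx -\langle \nabla_{T_j} L, T_j\rangle .
\]
I will take this linearization as the definition of utility, since the statement specifies ``first-order task-loss increase.''

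First, I use the disjoint-partition hypothesis. The penalized parameters in $T_j$ appear in exactly one lasso term, $\lambda\|T_j\|_F$. Hence the partial gradient of the total penalty with respect to the coordinates of $T_j$ comes only from this term. Because $\|T_j\|_F>0$, the Frobenius norm is differentiable there, with gradient $T_j/\|T_j\|_F$. No subgradient issues arise; this is exactly where both hypotheses are needed. If groups overlapped, other terms would contribute, and if $T_j=0$ only a subdifferential inclusion would hold.

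Second, I write the critical-point condition on the coordinates of $T_j$:
\[
\nabla_{T_j} L + \lambda\,\frac{T_j}{\|T_j\|_F} = 0 .
\]
Taking the Frobenius inner product with $T_j$ gives
\[
\langle \nabla_{T_j}L, T_j\rangle = -\lambda\|T_j\|_F ,
\]
and therefore $\omega_j = -\langle \nabla_{T_j}L,T_j\rangle = \lambda\|T_j\|_F$. Equivalently, $\frac{d}{dt}\mathcal{L}_{\mathrm{reg}}(tT_j)\big|_{t=1}=0$ along the radial direction. Using only this radial direction means full stationarity is more than the argument needs.

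The main obstacle is definitional rather than technical: making precise that ``utility'' is the linearized loss change along $-T_j$, not the exact finite difference. I will state this explicitly at the outset. A second care point applies when a slice belongs to a joint group such as $\{W_q,W_k,W_v\}$ on $d_{ai}$. In that case $T_j$ must denote the entire group (the union of parameters removed together), so that the identity is applied to the joint Frobenius norm and not to individual factors.
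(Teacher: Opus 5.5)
Your proposal is correct and follows the paper's proof essentially verbatim: the paper also writes the stationarity condition $\mathrm{d}T + \lambda T/\|T\|_F = 0$ on the slice's coordinates and takes the trace inner product with $T$ to obtain $\omega = -\mathrm{tr}(T^T\mathrm{d}T) = \lambda\|T\|_F$. Your extra remarks are accurate refinements rather than departures: the argument needs only the radial component of stationarity, differentiability requires $\|T_j\|_F>0$, and a joint ensemble must be treated as a single group.
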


\begin{proof}
At the critical point,
$\mathrm{d}T + \frac{\lambda T}{\| T \|_F} = 0$, so
$\omega = -\mathrm{tr}(T^T\,\mathrm{d}T)
       = \frac{\lambda}{\| T \|_F}\,\mathrm{tr}(T^T\!T)
       = \lambda\|T\|_F$.
\end{proof}

When lasso groups are not disjoint, which occurs whenever a tensor has more than one structural axis,
the identity requires a correction term. See Appendix~\ref{app:corrected_harm} for extended analysis.
The disjoint case captures the essential structure, but the corrected form is used in practice.     

The penalty must act on both sides of each structural axis (i.e., gauge-connected tensor slices)
to ensure stable convergence to a gauge-balanced network.
The additive penalty
$\lambda_A\|\alpha A_j\| + \lambda_B\|\alpha^{-1} B_j\|$
is stable under rescaling;
its minimum over~$\alpha$ occurs when both terms equal $\sqrt{\lambda_A\lambda_B \|A_j\| \|B_j\|}$.
This gauge-balance is monotone in the product $\|A_j\|\cdot\|B_j\|$,
meaning that additive penalties recover the operator component ranking at equilibrium.

\begin{proposition}[Gauge-balanced utility]
\label{prop:balance}
Let two gauge-connected factors have
per-index penalty
$\lambda_A\|A_j\|_F + \lambda_B\|B_j\|_F$
and per-index gauge freedom
($A_j \to \alpha A_j$, $B_j \to B_j/\alpha$ for $\alpha > 0$).
Define the geometric mean
$\mathrm{GM}_j = \sqrt{\lambda_A \lambda_B \|A_j\|_F\,\|B_j\|_F}$.
\begin{enumerate}[label=(\alph*),itemsep=2pt]
\item The penalty has a unique minimum over~$\alpha$:
$\lambda_A\|A_j\|_F = \lambda_B\|B_j\|_F = \mathrm{GM}_j$.
\item At a critical point, gauge balance holds for every index~$j$
 to give the utility $\omega_j = \mathrm{GM}_j$.
\item $\mathrm{GM}_j$ is gauge-invariant and computable from parameters at any training state.
\end{enumerate}
\end{proposition}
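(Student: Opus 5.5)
Part (a) is a one-variable minimization. Write $a = \lambda_A\|A_j\|_F$ and $b = \lambda_B\|B_j\|_F$, both positive when the channel is alive. Along the gauge orbit the penalty is $f(\alpha) = a\alpha + b/\alpha$ for $\alpha>0$.
\begin{itemize}
\item This function is strictly convex on $(0,\infty)$, since $f''(\alpha) = 2b/\alpha^3 > 0$.
\item It tends to $+\infty$ at both ends of the interval.
\item Its only stationary point is $\alpha^* = \sqrt{b/a}$.
\end{itemize}
At $\alpha^*$ both terms equal $\sqrt{ab}$. So the rebalanced factors satisfy $\lambda_A\|\alpha^* A_j\|_F = \lambda_B\|B_j/\alpha^*\|_F = \sqrt{ab} = \mathrm{GM}_j$, and uniqueness follows from strict convexity. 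Equivalently, AM--GM gives $a\alpha + b/\alpha \ge 2\sqrt{ab}$, with equality exactly at $\alpha^*$.

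Part (c) comes before (b), because (b) uses it. Under the gauge action $\|A_j\|_F\|B_j\|_F$ becomes $\alpha\|A_j\|_F\cdot\alpha^{-1}\|B_j\|_F$, which is unchanged. Hence $\mathrm{GM}_j$ is invariant. It is an explicit function of the current parameters and the known weights $\lambda_A,\lambda_B$, so it can be evaluated at any training state, balanced or not. It also equals half the minimum of the penalty over the orbit, by (a).

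For (b), the first step is to use gauge invariance of the task loss. Since $L(\alpha A_j, B_j/\alpha) = L(A_j,B_j)$, the regularized loss restricted to the gauge orbit of index $j$ is $L + f(\alpha)$ plus terms independent of $\alpha$. A critical point of $\mathcal L_{\mathrm{reg}}$ is therefore a critical point of $f$ at $\alpha=1$. By (a) this means $\lambda_A\|A_j\|_F = \lambda_B\|B_j\|_F = \mathrm{GM}_j$, and this holds for every $j$ separately.

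The second step is to apply Proposition~\ref{prop:equilibrium} to each slice separately. This uses the disjoint-partition assumption, with $A_j$ and $B_j$ belonging to different lasso groups. It gives that the utility of removing $A_j$ is $\lambda_A\|A_j\|_F$, and likewise the utility of removing $B_j$ is $\lambda_B\|B_j\|_F$.

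The third step is to identify the channel utility $\omega_j$. Removing channel $j$ removes the operator component $B_jA_j^{\top}$. To first order, this change is the same whether it is realized by zeroing $A_j$ or by zeroing $B_j$. I would make this precise by scaling the channel along the ray $t\mapsto (tA_j, B_j)$ or $(A_j, tB_j)$. Both paths produce the same function $L(tB_jA_j^\top)$, whose derivative at $t=1$ is $-\mathrm{tr}(A_j^\top\,\mathrm dA_j)$. By the equilibrium identity this derivative equals $\lambda_A\|A_j\|_F$, and by the same argument it also equals $\lambda_B\|B_j\|_F$. Gauge balance makes the two agree, and gives $\omega_j = \mathrm{GM}_j$.

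I expect the main obstacle to be this identification. The proof must define $\omega_j$ consistently with the single-slice utility in Proposition~\ref{prop:equilibrium}, which measures $-\mathrm{tr}(T^\top \mathrm dT)$, as a derivative along the scaling ray rather than a finite removal. It must then check that the two slices give the same directional derivative. The Euler-type identity for the bilinear map $(A_j,B_j)\mapsto B_jA_j^\top$ shows this: the derivatives along $A_j$ and along $B_j$ coincide because the gradient is orthogonal to the gauge direction. That orthogonality holds exactly at the gauge-balanced critical point.
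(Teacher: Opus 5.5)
Your proposal is correct and follows the paper's own proof: (a) by minimizing $a\alpha+b/\alpha$ along the gauge orbit, (c) by cancellation of $\alpha$ in the product of norms, and (b) by noting that loss-invariance of the gauge direction makes any imbalance incompatible with stationarity, then invoking Proposition~\ref{prop:equilibrium} on one slice. Your extra step showing that both slices give the same first-order utility is a useful elaboration that the paper leaves implicit, with two small corrections: the equality of task-loss directional derivatives, $\mathrm{tr}(A_j^\top\mathrm{d}A_j)=\mathrm{tr}(B_j^\top\mathrm{d}B_j)$, holds at every point by gauge invariance (only identifying these with $-\lambda_A\|A_j\|_F$ and $-\lambda_B\|B_j\|_F$ requires the critical point), and the derivative at $t=1$ is $+\mathrm{tr}(A_j^\top\mathrm{d}A_j)$, with the utility being its negative.
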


\begin{proof}
(a)~Taking $\partial_{\alpha}\left[ \alpha\lambda_A\|A_j\| + \lambda_B\|B_j\|/\alpha \right] = 0$
gives $\alpha = \sqrt{(\lambda_B \|B_j\|)/(\lambda_A\|A_j\|)}$ at the minimum.
Substitution recovers the stated result.
(b)~The gauge direction is loss-invariant, so imbalance contradicts stationarity.
At balance, $\omega_j = \lambda_A\|A_j\| = \mathrm{GM}_j$ (Proposition~\ref{prop:equilibrium}).
(c)~$\sqrt{\lambda_A \lambda_B \|\alpha A_j\|\,\|B_j/\alpha\|} = \sqrt{\lambda_A \lambda_B \|A_j\|\,\|B_j\|}$.
\end{proof}

When lasso groups overlap across axes, a correction is needed
(Appendix~\ref{app:corrected_harm}).
Simultaneous balance across all axes is achieved by gauge Sinkhorn
iterations (Section~\ref{sec:training}).

\subsection{Efficiency-Scaled Calibration}
\label{sec:calibration}

Different structural axes have different element counts and FMA costs per channel.
For example, the prescaled RMS weights~$\gamma$ multiply only one scalar per channel on $d_{ai}$,
whereas the attention ensemble $\{W_q, W_k, W_v\}$ requires $H(2d_k + d_v)$ FMAs per channel.
Without calibration, the penalty exerts uncoordinated pressure across axes.
The following proposition derives the calibration needed to optimize computational efficiency.

\begin{proposition}[Efficiency-scaled calibration]
\label{prop:calibration}
Let $n_A$, $n_B$ be the element counts of the structural slices
comprising gauge-connected factors $A$ and $B$ at position~$j$,
and let $r_j$ be the total FMAs per token for channel~$j$
(including operations such as scaled dot-product attention).
Define the per-element RMS
$\varepsilon_A = \|A_j\|_F / \sqrt{n_A}$ and
$\varepsilon_B = \|B_j\|_F / \sqrt{n_B}$.
Then the unique penalty coefficients under which the efficiency
$\eta_j \equiv \omega_j / r_j = \rho\,\varepsilon_j$ is uniform
across all axes, with
$\varepsilon_A = \varepsilon_B \equiv \varepsilon_j$
at gauge balance, are
\begin{equation}
\label{eq:efficiency_lambda}
\lambda_A = \rho\,r_j/\sqrt{n_A},
\qquad
\lambda_B = \rho\,r_j/\sqrt{n_B}.
\end{equation}
\end{proposition}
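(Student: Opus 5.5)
The plan is to combine the two conditions in the statement, balance in per-element RMS and efficiency proportional to that RMS, with Proposition~\ref{prop:balance}. Together they fix $\lambda_A$ and $\lambda_B$ separately. First, by Proposition~\ref{prop:balance}(a), gauge balance gives $\lambda_A\|A_j\|_F = \lambda_B\|B_j\|_F$. Substituting $\|A_j\|_F = \sqrt{n_A}\,\varepsilon_A$ and $\|B_j\|_F = \sqrt{n_B}\,\varepsilon_B$ yields
\[
\lambda_A\sqrt{n_A}\,\varepsilon_A = \lambda_B\sqrt{n_B}\,\varepsilon_B .
\]
This holds for every pair of positive values $(\varepsilon_A,\varepsilon_B)$ that can arise at balance. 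So requiring $\varepsilon_A=\varepsilon_B$ at balance is equivalent to $\lambda_A\sqrt{n_A} = \lambda_B\sqrt{n_B}$. Call this common value $c_j$.

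Second, I would express the utility. By Proposition~\ref{prop:balance}(b), at a critical point $\omega_j = \mathrm{GM}_j = \lambda_A\|A_j\|_F = \lambda_A\sqrt{n_A}\,\varepsilon_j = c_j\,\varepsilon_j$. The efficiency is then $\eta_j = \omega_j/r_j = (c_j/r_j)\,\varepsilon_j$. For this to equal $\rho\,\varepsilon_j$ with the same $\rho$ on every axis and every channel, we need $c_j/r_j=\rho$, that is $c_j = \rho\,r_j$. Solving $\lambda_A\sqrt{n_A}=\lambda_B\sqrt{n_B}=\rho r_j$ gives \eqref{eq:efficiency_lambda}.

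For uniqueness I would argue in both directions. Conversely, the stated coefficients satisfy both conditions by direct substitution. In the forward direction, each step above was an equivalence, provided $\varepsilon_j>0$. This holds because Proposition~\ref{prop:equilibrium} applies only to slices with $\|T_j\|_F>0$, and balance then forces both norms to be positive. Since the identity $\eta_j=\rho\varepsilon_j$ must hold for a nonzero $\varepsilon_j$, dividing by $\varepsilon_j$ determines $c_j$ uniquely. The balance relation then determines each of $\lambda_A$ and $\lambda_B$.

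The main obstacle is making precise what ``uniform across all axes'' quantifies over. I would read the requirement as an identity in $\varepsilon_j$: the relation must hold for any balanced state, not just one particular trained state. Otherwise one numerical coincidence could not pin down two coefficients. With this reading, the balance condition supplies the ratio $\lambda_A/\lambda_B = \sqrt{n_B/n_A}$ and the efficiency condition supplies the scale $\rho r_j$. I would state this interpretation explicitly. I would also note that the derivation assumes disjoint lasso groups, as in Proposition~\ref{prop:equilibrium}; overlapping groups defer to the corrected form in Appendix~\ref{app:corrected_harm}.
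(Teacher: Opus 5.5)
Your proposal is correct and follows the paper's argument. Both combine the balanced utility $\omega_j=\lambda_A\sqrt{n_A}\,\varepsilon_A=\lambda_B\sqrt{n_B}\,\varepsilon_B$ with the requirement $\eta_j=\rho\,\varepsilon_j$; the only difference is order, since you take the ratio $\lambda_A\sqrt{n_A}=\lambda_B\sqrt{n_B}$ from $\varepsilon_A=\varepsilon_B$ and then the scale $\rho r_j$, while the paper reads each coefficient off the efficiency condition and recovers $\varepsilon_A=\varepsilon_B$ from balance afterwards (your positivity remark makes the uniqueness explicit, though the ``identity in $\varepsilon_j$'' reading is unnecessary because dividing by $\varepsilon_j>0$ at a single balanced state already fixes both coefficients).
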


\begin{proof}
From the equilibrium identity,
$\eta_j = \omega_j / r_j = \lambda_A \sqrt{n_A}\,\varepsilon_A / r_j$.
Setting $\eta_j = \rho\,\varepsilon_A$ gives
$\lambda_A = \rho\,r_j / \sqrt{n_A}$, and symmetrically
for~$\lambda_B$.
Gauge balance then gives
$\rho\,r_j\,\varepsilon_A = \rho\,r_j\,\varepsilon_B$,
so $\varepsilon_A = \varepsilon_B$.
\end{proof}

Since both $\varepsilon_A$ and $\varepsilon_B$ scales are the same at gauge balance,
step-normalized optimizers (Muon, Adam) apply comparable perturbations to each lasso group.
The global multiplier $\rho > 0$ controls overall compression strength,
which can be fixed or derived adaptively from a loss target.

In summary, Propositions~\ref{prop:equilibrium}--\ref{prop:calibration}
establish that at gauge balance, the additive penalty provides
a calibrated, gauge-invariant measure of each channel's
first-order inference utility per compute.
They are first-order statements at a critical point, which training
only approaches, so the penalty reports an estimate of utility
rather than its exact value.
Section~\ref{sec:training} develops the machinery to apply this
criterion during training, including the coupled updates and gauge
Sinkhorn iterations that drive the network toward the regime where
these statements bind.

\section{Training Procedure}
\label{sec:training}

This section describes how the calibrated penalties from
Section~\ref{sec:theory} are applied during training.
Three mechanisms are essential:
coupled regularization integrates the penalty into each optimizer step;
adaptive~$\rho$ anchors the penalty strength to a loss target;
and trust-drop physically removes channels whose gauge-invariant
utility falls below the learning rate, compacting all affected
tensors to retain dense matrix operations.
Two mechanisms provide robustness: the conservative~$\rho$ solver
restricts compression to steps where the model has already passed the loss target,
and gauge Sinkhorn iterations enforce balance before drop decisions.
Channel exchange, an optional capacity adjustment mechanism, may improve compression quality (Appendix~\ref{app:arch_details}).
Algorithm~\ref{alg:training} shows how these mechanisms are organized in our prototype.

\paragraph{Coupled regularization.}
The penalty gradient must compete with the task gradient through
the same optimizer scaling; otherwise the equilibrium identity
(Proposition~\ref{prop:equilibrium}) does not hold in practice.
If $g$ is the task gradient and
$r = \sum_p \lambda_p T_j^{(p)} / \|T_j^{(p)}\|_F$ is the
unscaled regularization gradient,
the effective gradient
$g_{\mathrm{eff}} = g + \rho\, r$ enters the optimizer as a
single update.
We track momentum on $g$ alone so that both $\rho$ and $r$
respond immediately to structural changes.

\paragraph{Adaptive $\rho$.} \label{sec:adaptive_rho}
A fixed~$\rho$ can under-compress or inhibit training.
Adaptive $\rho$ anchors the penalty strength to a loss
target~$L^*$ by solving the~$\rho$ needed for the post-step loss
to hit the target.
The conservative solver sets
\begin{equation}
\label{eq:conservative_rho}
\rho = \bigl(L^* - L\bigr) / \bigl(-\mathrm{lr}\cdot \langle \mathrm{sign}(m),\, r\rangle\bigr),
\end{equation}
where $m$ is the optimizer momentum.
This gives $\rho > 0$ only when $L < L^*$: compression occurs
only after the model has reached the target, without assuming
the optimizer will improve the loss further.
If the model does improve, the margin $L^* - L$ grows and the
penalty strengthens automatically.
We set $\rho = 0$ when the denominator is not sufficiently
negative (penalty does not oppose descent).
A more aggressive variant includes a descent credit
$\mathrm{lr}\cdot\|m\|_1$ in the numerator,
spending predicted optimizer gains on compression:
\begin{equation}
\label{eq:adaptive_rho}
\rho = \bigl( (L^* - L) + \mathrm{lr}\cdot\|m\|_1 \bigr)
/ \bigl(-\mathrm{lr}\cdot \langle \mathrm{sign}(m),\, r\rangle\bigr).
\end{equation}
This allows the system to \emph{orbit} the loss target,
compressing continuously rather than waiting for realized margin.
The complexity attractor emerges as the structural profile
at which no channel falls below the trust-drop threshold under
the adaptive~$\rho$ that holds the loss at~$L^*$.

\paragraph{Trust-drop.}
Channels driven toward zero by the penalty must be physically
removed to realize the compression as smaller dense tensors.
The gauge-invariant geometric mean
$\sqrt{\varepsilon_A \cdot \varepsilon_B}$ measures a channel's
distance to zero without requiring exact gauge balance.
A channel is dropped when
\begin{equation}
\label{eq:trust_drop}
\sqrt{\varepsilon_A(j) \cdot \varepsilon_B(j)} \;\leq\; \tau,
\qquad \tau = k \cdot \mathrm{lr},
\end{equation}
where $k \geq 1$ is a trust multiplier.
At $k = 1$, only channels within one optimizer step of zero are
dropped.
When a channel is dropped,
corresponding slices are removed from all tensors touching the
axis.
If all channels on an axis are dropped, the owning operator is
removed entirely.

\paragraph{Gauge Sinkhorn.}
During training the network is not at a critical point, so gauge
balance (Proposition~\ref{prop:balance}) does not hold exactly.
Gauge Sinkhorn is a periodic reparameterization that enforces
balance before drop decisions.
For each gauge-connected pair on each axis,
the balance ratio
$c = \sqrt{\lambda_B\|B_j\|_F / (\lambda_A\|A_j\|_F)}$
rescales $A_j \leftarrow c\,A_j$, $B_j \leftarrow B_j/c$,
preserving the network function exactly on bilinear couplings and up to
a scalar gain where one passes through a prescaled-RMS output
(Appendix~\ref{app:prewrms}).
Momentum transforms contravariantly:
$m_A \leftarrow m_A/c$, $m_B \leftarrow c\,m_B$.
Alternating over all axes converges in 2--3 iterations.

\begin{algorithm}[t]
\caption{Training with adaptive compression}
\label{alg:training}
\begin{algorithmic}[1]
\REQUIRE Loss target~$L^*$, trust multiplier~$k$,
  drop interval~$N$, learning rate schedule~$\mathrm{lr}(t)$
\FOR{each training step~$t$}
  \STATE Compute task loss~$L$ and gradient~$g$
  \STATE Compute efficiency-scaling \eqref{eq:efficiency_lambda} from Proposition~\ref{prop:calibration}
         and prepare penalty gradient~$r$
  \STATE Solve for~$\rho$ via~\eqref{eq:conservative_rho} or~\eqref{eq:adaptive_rho}
  \STATE Take combined optimizer step on $g_{\mathrm{eff}} = g + \rho\, r$
  \IF{$t \bmod N = 0$ and $\rho > 0$}
    \STATE Gauge Sinkhorn: rebalance all gauge-connected pairs
    \STATE Trust-drop: remove all channels that satisfy \eqref{eq:trust_drop}
    \STATE Cascade: remove any operators or blocks with no remaining channels
  \ENDIF
\ENDFOR
\RETURN Converged architecture
\end{algorithmic}
\end{algorithm}

Training proceeds in three phases:
a warmup during which $\rho = 0$ and no compression occurs,
a compression phase under adaptive~$\rho$,
and an optional cooldown during which $\rho$ returns to zero and
the architecture is frozen.
Every structural modification produces a dense tensor;
drops shrink the model's memory footprint and forward-pass cost
while momentum tensors undergo compatible surgery so that
optimization continues without restarting.

\section{Experiments}
\label{sec:experiments}

The experiments address five questions:
(1)~Can the method find dramatically smaller architectures while
preserving task performance? (Section~\ref{sec:exp_ff31})
(2)~Do compressed models match or exceed hand-designed baselines?
(Section~\ref{sec:exp_equal})
(3)~Which components of the framework are necessary?
(Section~\ref{sec:ablations})
(4)~Can these architectures be reached without sustained pressure?
(Section~\ref{sec:posthoc})
(5)~Does dense compaction produce real training speedups?

We evaluate GaugeLasso on four tasks:
polynomial long division over a finite field
(deterministic, exact ground truth),
character-level language modeling on PG-19,
masked autoencoding on CelebA, and
a synthetic retrieval task that serves as the ablation testbed.
In both cases, a $2{\times}$-wider,
$2{\times}$-deeper architecture is compressed under penalty.
On language modeling, the compressed model is evaluated against
the baseline at equal FMA.
On CelebA, structural profiles from the initial compression
guide a second iteration with expanded capacity.
All experiments use the Muon optimizer~\citep{jordan2024muon, bernstein2024muon}
with coupled regularization and efficiency-scaled calibration.
Architecture specifications and full hyperparameters appear in
Appendix~\ref{app:exp_details}, per-seed results in
Appendix~\ref{app:full_population}.

\subsection{Polynomial Long Division over FF31}
\label{sec:exp_ff31}

This experiment tests complexity suppression on a deterministic
mathematical task requiring a non-trivial sequence of steps.
A transformer trained on polynomial long division
over~$\mathbb{F}_{31}$ (8~layers, 13.2M~FMA;
Appendix~\ref{app:exp_details})
reaches 100\% accuracy by step~4k without penalty.
Under loss targets $L^*{=}0.01$ and $0.005$,
converged seeds discover architectures at ${\sim}70$k~FMA
($187{\times}$ compression, up to $255{\times}$) with perfect
exact-match accuracy (Table~\ref{tab:ff31}).
The tighter target is more reliable (18/25 vs.\ 3/5 converged)
but requires twice the training budget;
of the seven that do not converge, two never reach a loss low
enough to begin compressing and five compress $146\times$--$191\times$
and then fail to hold the target.
Structural profiles (Appendix~\ref{app:ff31_profiles}) show that
the task never requires more than three attention blocks,
often discarding the first, with a prominent dip at the third MLP block.

\begin{table}[t]
\centering
\caption{Polynomial division over $\mathbb{F}_{31}$
  (5~seeds at $L^*{=}0.01$; 25 at $L^*{=}0.005$).
  A run has \emph{converged} if it compressed below half the initial
  FMA and reached ${\geq}99\%$ exact match on held-out data;
  statistics are over converged seeds only, and
  Appendix~\ref{app:full_population} reports every seed.}
\label{tab:ff31}
\scriptsize
\begin{tabular}{lcccc}
\toprule
Condition & Converged & Exact Match & FMA & Compression \\
\midrule
No penalty & --- & $1.000$ & 13.2M & $1\times$ \\
$L^*=0.01$, 40k steps & 3/5 & $1.000$ & 63k [63k, 90k] & $209\times$ \\
$L^*=0.005$, 70k steps & 18/25 & $1.000$ & 70k [52k, 89k] & $187\times$ \\
\bottomrule
\end{tabular}

\end{table}

\subsection{Language Modeling and Vision}
\label{sec:exp_equal}
Character-level language modeling tests compression where structural
needs vary continuously across depth.
The $L^*{=}\ln 3$ target achieves perplexity \textbf{2.61}
versus the baseline's 2.65 at matched FMA (Table~\ref{tab:text-transformer});
a 24-layer model tested at greater depth compresses $2.2{\times}$ with 2.34 ppl.
Early layer dimensions narrow while deeper layers retain capacity
(Appendix~\ref{app:text_transformer_profiles}).

CelebA masked autoencoding tests compression on a vision task
with qualitatively different structural demands than language
(Table~\ref{tab:celeba-mae}).
Under a stringent loss target ($L^*{=}0.32$),
the compressed model matches the baseline but $d_k$ and~$d_v$
saturate in every layer (Figure~\ref{fig:celeba_loss032}),
a diagnostic that requires pushing the model near its capacity limit.
With $d_k{=}256$, the network sheds excess capacity in early layers
but retains it in later layers, achieving loss~0.213,
well below the baseline (Figure~\ref{fig:celeba_dk256}).
$D$ and~$d_f$ remain near their ceilings,
identifying the next axes for expansion
(Appendix~\ref{app:celeba_profiles}).

\begin{table}[t]
\centering
\caption{Compression results on language modeling and vision
  (mean $\pm$ std, 5~seeds unless noted).}
\label{tab:equal-compute}
\begin{subtable}{\textwidth}
\centering
\caption{Character-level LM on PG-19.
  A $7{\times}$ larger architecture is compressed to the
  baseline's FMA budget.}
\label{tab:text-transformer}
\scriptsize
\begin{tabular}{llcccc}
\toprule
Condition & Architecture & Compression & Loss & Perplexity & FMA \\
\midrule
Baseline & 6L/6H, $D{=}288$ & $1{\times}$ & $0.975 \pm 0.009$ & $2.652 \pm 0.023$ & 7.60M \\
$L^*{=}\ln 3$ & 12L/6H, $D{=}576$ & $7.1{\times}$ & $\mathbf{0.959 \pm 0.008}$ & $\mathbf{2.610 \pm 0.021}$ & 7.56M \\
$L^*{=}\ln 4$ & 12L/6H, $D{=}576$ & $7.5{\times}$ & $1.009 \pm 0.026$ & $2.745 \pm 0.073$ & 7.17M \\
\bottomrule
\end{tabular}

\end{subtable}
\vspace{6pt}
\begin{subtable}{\textwidth}
\centering
\caption{Masked autoencoder on CelebA.
  The last row uses $d_k{=}256$ (4~seeds).}
\label{tab:celeba-mae}
\scriptsize
\begin{tabular}{llcccc}
\toprule
Condition & Architecture & Compression & Loss & FMA \\
\midrule
Baseline & 6L/8H, $D{=}384$ & $1\times$ & $0.236 \pm 0.001$ & 12.1M \\
$L^*{=}0.32$ & 12L/8H, $D{=}768$ & $8.0\times$ & $0.237 \pm 0.001$ & 11.3M \\
$L^*{=}0.40$ & 12L/8H, $D{=}768$ & $9.4\times$ & $0.251 \pm 0.001$ & 9.6M \\
\midrule
$L^*{=}0.24$, $d_k{=}256$ & 12L/8H, $D{=}768$ & $2.9\times$ & $\mathbf{0.213 \pm 0.000}$ & 50.1M \\
\bottomrule
\end{tabular}

\end{subtable}
\end{table}

\subsection{Retrieval and Ablations}
\label{sec:ablations}

\begin{table}[t]
\centering
\caption{Retriever ablation (mean $\pm$ std over 5 seeds;
  sign step over the 4 that completed).
  Baseline uses conservative solver, gauge Sinkhorn, exchange,
  and $k{=}1$; each row deviates in one dimension.}
\label{tab:retriever-ablation}
\scriptsize
\begin{tabular}{@{}l @{\hskip 3pt}l@{\hskip 0.5pt}l@{\hskip 0.5pt}l@{\hskip 0.5pt}l@{\hskip 6pt} llll@{}}
\toprule
\rotatebox{50}{Config} & \rotatebox{50}{Conserv.} & \rotatebox{50}{Sinkhorn} & \rotatebox{50}{Exch.} & \rotatebox{50}{Trust $k$} & \rotatebox{50}{25k Loss} & \rotatebox{50}{30k Loss} & \rotatebox{50}{30k Acc.} & \rotatebox{50}{FMA} \\
\midrule
Baseline & \checkmark & \checkmark & \checkmark & 1 & $1.657 \pm 1.981$ & $0.231 \pm 0.338$ & $0.983 \pm 0.023$ & 203k \\
No Sinkhorn & \checkmark & & \checkmark & 1 & $0.266 \pm 0.491$ & $0.128 \pm 0.255$ & $0.985 \pm 0.030$ & 255k \\
No Exchange & \checkmark & \checkmark & & 1 & $0.036 \pm 0.001$ & $0.001 \pm 0.000$ & $1.000 \pm 0.000$ & 278k \\
Sign Step & & \checkmark & \checkmark & 1 & $6.939 \pm 0.249$ & $5.554 \pm 1.347$ & $0.246 \pm 0.105$ & 101k \\
Trust $k{=}4$ & \checkmark & \checkmark & \checkmark & 4 & $0.048 \pm 0.008$ & $0.001 \pm 0.001$ & $1.000 \pm 0.000$ & 199k \\
Trust $k{=}16$ & \checkmark & \checkmark & \checkmark & 16 & $0.293 \pm 0.501$ & $0.005 \pm 0.007$ & $1.000 \pm 0.001$ & 206k \\
\bottomrule
\end{tabular}

\end{table}

The retrieval task provides low-cost ablations on a simple model
that requires attention and probes the framework's ability to
compress the residual stream dimension~$D$
(Appendix~\ref{app:retrieval_setup}).
Under the baseline configuration,
the framework compresses to one attention block and one MLP
with $D{=}26$--$38$ (Table~\ref{tab:retriever-ablation}).
The ablations validate our baseline settings:
exchange is the most impactful mechanism (37\% more FMA without it),
gauge Sinkhorn accelerates compression,
and the sign-step $\rho$-solver over-compresses catastrophically,
validating the conservative variant.
The trust multiplier is robust across $k \in \{1, 4, 16\}$;
we use $k{=}1$ as the safest default.
Per-condition profiles are in Appendix~\ref{app:retrieval_setup}.

\subsection{Reaching the Discovered Architectures}
\label{sec:posthoc}
Three controls ask whether these architectures are reachable without
sustained pressure: the same utility metric applied after training,
an external during-training method, and direct training of the
discovered architecture itself.

\paragraph{Post-hoc pruning.}
Applying the same gauge-calibrated utility metric to fully trained
models isolates \emph{when} compression occurs as the only variable
(Appendix~\ref{app:posthoc_details}).
On FF31, the first channel removed already breaks the loss ceiling.
Without penalty pressure, utility distributes so uniformly
that every channel is needed,
while during-training finds $187{\times}$ compression
(Table~\ref{tab:posthoc}).
On character-level LM at matched steps and FMA, post-hoc achieves
perplexity $3.191$ versus~$2.610$, $22\%$ worse.

\paragraph{An external during-training method.}
We reimplemented CoFi~\citep{xia2022structured}, the closest prior
during-training method, inside our framework and targeted the
measured cost of our own compressed model.
Its controller stalls at $1.97{\times}$ compression where $7.09{\times}$
was requested: the coarse mask families saturate, the multipliers are
still rising at the end of the pruning phase, and the discretized
network keeps every sublayer at full head width.
This may reflect the axes available rather than the method itself,
since the architecture we reach narrows key and value dimensions
\emph{inside} heads that stay alive, which its published mask space
cannot express (Appendix~\ref{app:cofi}).

\paragraph{Training the discovered architecture directly.}
Re-initializing a discovered architecture and training it without
penalty asks whether the pressure is needed once the shape is known.
On retrieval the control matches the compressed model, reaching
accuracy $0.9999$ against $0.9829$ at lower FMA.
On character-level LM the control reaches perplexity $2.640$, against
$2.610$ compressed and $2.652$ for the uniform baseline at matched
FMA: comparable quality, and consistent with parity, though this arm
alone cannot settle it.
On FF31 the control clearly fails: four of five seeds finish below
$0.01$ exact match and the fifth reaches $0.805$, against $1.000$
for the compressed model.
Sustained pressure is therefore what \emph{discovers} these
architectures; whether it is additionally required to \emph{train}
them is task-dependent, and on exact algorithmic execution it is.

\begin{table}[ht]
\centering
\caption{Reaching the discovered architectures.
  Post-hoc uses the same utility metric, applied after training;
  from-scratch re-initializes the discovered architecture and trains
  it without penalty.
  Seed counts are given because post-hoc FF31 is scored over the
  2 of 5 seeds that learned the task in phase~A.
  Step counts are realized, not configured caps, and differ across
  rows; only the post-hoc chains are budget-matched to their
  during-training comparator.
  Details in Appendix~\ref{app:posthoc_details}.}
\label{tab:posthoc}
\scriptsize
\begin{tabular}{llcccc}
\toprule
Task & Method & Seeds & Training steps & FMA/tok & Quality \\
\midrule
FF31 & During-training ($L^*{=}0.005$) & 18/25 & 70k & 70k & $1.000$ EM \\
FF31 & Post-hoc (A+B+C) & 2/5 & 50k & 13.2M & $1.000$ EM \\
FF31 & From-scratch & 5 & 70k & 71k & $0.161$ EM \\
\midrule
Char LM & During-training ($L^*{=}\ln 3$) & 5 & 47k & 7.6M & $2.610$ ppl \\
Char LM & Post-hoc (A+B+C) & 5 & 47k & 7.6M & $3.191$ ppl \\
Char LM & From-scratch & 5 & 32k & 7.6M & $2.640$ ppl \\
Char LM & Uniform baseline & 5 & 32k & 7.6M & $2.652$ ppl \\
\midrule
Retrieval & During-training & 5 & 30k & 203k & $0.9829$ acc \\
Retrieval & From-scratch & 5 & 30k & 185k & $0.9999$ acc \\
\bottomrule
\end{tabular}

\end{table}

\paragraph{Training acceleration.}
Figure~\ref{fig:wallclock} summarizes wall-clock step time and
loss across all four tasks.
Because structural changes retain dense tensors,
step time falls monotonically as the model compresses,
yielding $1.6$--$5{\times}$ speedups.
During compression, adaptive~$\rho$ makes the loss orbit the target~$L^*$;
with penalties removed during cooldown, loss descends further
at the compressed architecture.
Realized speed depends on both FMA and shape;
with the penalty disabled at matched batch and hardware,
the full-size 12-layer model takes $28.3$s per 100 steps,
the $7.1{\times}$ FMA-compressed architecture $11.9$s ($2.4{\times}$ in time),
and an FMA-matched uniform baseline $6.9$s, with fewer, larger operators.

\begin{figure}[t]
\centering
{\small\textbf{Wall-clock step time and loss during training across all four tasks.}}
\includegraphics[width=\textwidth]{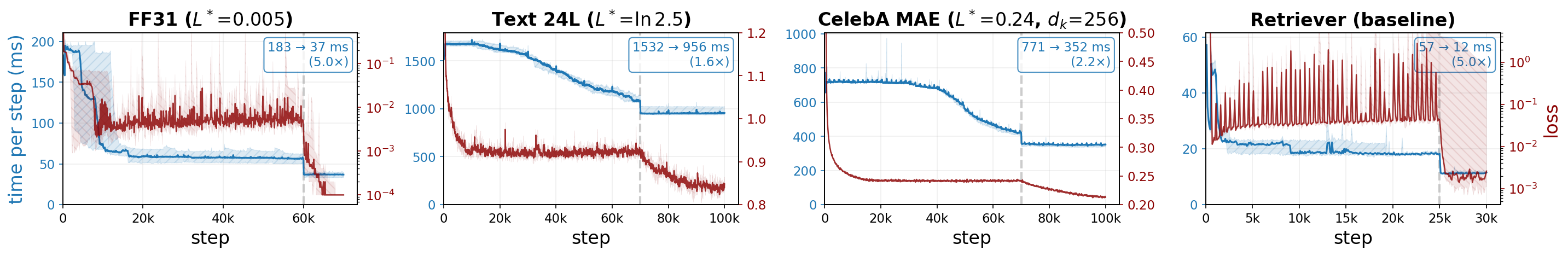}
\caption{Blue (left axis): wall-clock step time.
Red (right axis): task loss.
Grey dashed lines mark cooldown onset.
The text panel uses a 24-layer model
(Appendix~\ref{app:24L_profiles}).
Hatched bands show the range across seeds.}
\label{fig:wallclock}
\end{figure}

\section{Discussion}
\label{sec:discussion}

GaugeLasso compresses dense transformers during training.
Additive group-lasso penalties resolve the gauge pathology that renders naive penalties
unworkable, recovering the product-norm ranking at equilibrium.
Efficiency-scaled calibration puts all channels on a common scale and adaptive
penalty strength tracks the loss target, yielding a principled per-channel measure
of inference utility.
The result is heterogeneous architectures that remain fully dense,
allowing optimized GPU kernels.

\textbf{The complexity attractor.}
Under a fixed loss target, the adaptive~$\rho$ solver and
trust-drop criterion converge to a structural profile of
near-minimal compute sustaining that loss.
Different targets trace different points on a
loss-versus-compute Pareto front, each a distinct architecture
discovered by training.
Shape converges exactly; widths converge in proportion to what they cost.
Across ten runs of one fixed retrieval configuration, every seed keeps
exactly one attention block and one MLP block, and the residual stream
spans $23$--$39$ channels, while the far cheaper MLP interior spans
$1$--$20$.
Total inference cost is set by the wide, expensive axes, so it varies
much less than the dimensions do.
On language modeling and vision, seeds likewise agree on aggregate
capacity while differing in per-layer allocation.

\textbf{Representation reorganization.}
The post-hoc comparison (Section~\ref{sec:posthoc}) confirms that
during-training compression discovers representations that are not
subsets of unconstrained training.
The same metric applied post-hoc either fails to compress (FF31)
or reaches substantially worse quality (character-level LM).
These representations emerge from sustained resource pressure that
drives the network to reorganize how it processes information, not
from removing redundancy from a fixed solution.
Discovery and reachability are distinct, however: the architecture
cannot be extracted from a converged model, but once discovered it is
portable on our statistical tasks, which recover comparable quality
from fresh initialization.
Retraining failed on exact algorithmic execution.

\textbf{Iterative architecture design.}
A compression trial reports more than a smaller model.
The structural profile identifies which axes were over-provisioned and
which saturated their budget: axes that compress far were larger than
the task requires, while axes that hold at their ceiling were binding.
Both readings inform the next trial, so successive rounds start closer
to the target and spend less of the run finding it.
On CelebA, $d_k$ saturation in the first iteration guides a second
that allocates capacity where the task needs it, improving loss
from~0.237 to~0.213 (Section~\ref{sec:exp_equal}).

\textbf{Limitations.}
The number of attention heads is fixed throughout training;
extending the penalty to heads is left to future work.
Channel regrowth through nonlinearities is not supported, so the
conservative solver and trust-drop criterion are designed to prevent
premature removal (Appendix~\ref{app:regrowth}).
Discovered architectures narrow more than they shorten, so at these
scales launch and memory latency limit realized speed more than
arithmetic, and FMA reduction overstates wall-clock gain
(Figure~\ref{fig:wallclock}).
This work contributes the mathematics of coherent, FMA-calibrated
utility suppression, tested on architectures small enough to study in
full: every experiment runs below $150$M~FMA and trains from scratch.
That scope makes the structural profiles comparable seed by seed and is
the basis for extending to larger pretrained models; the theory is
architecture-independent, and the 24-layer experiments
(Appendix~\ref{app:24L_profiles}) suggest transferability.


\section{Conclusion}
The central finding is that sustained structural pressure during training discovers representations more efficient than post-hoc methods can extract from converged models.
GaugeLasso makes this possible.
Dynamically calibrated, gauge-invariant channel utilities drive dense compression without disrupting optimization.
The compressed architectures are not just smaller,
they reflect depth-dependent specialization that fixed-capacity training cannot discover,
and they are portable, retraining to comparable quality from fresh
initialization on our statistical tasks.
Dense compaction accelerates training itself,
and each trial returns a structural profile that informs the next architecture.
As AI adoption grows, inference energy per token is an economic and environmental priority;
this work shows that substantial gains are available not by scaling up,
but by enabling networks to reorganize into more efficient internal structure.

\begin{ack}
This work was supported by ANR-22-CE23-0002 ERIANA,  ANR-22-EXES-0009 MAIA and was granted access to the HPC resources of IDRIS under the allocation 2026-AD011013338 made by GENCI.

Claude Code (Anthropic) was used as a development tool to assist with
code implementation, document organization, and technical exposition.
\end{ack}

\bibliographystyle{plainnat}
\bibliography{references}

\ifchecklist
\newpage
\section*{NeurIPS Paper Checklist}

\begin{enumerate}

\item {\bf Claims}
    \item[] Question: Do the main claims made in the abstract and introduction accurately reflect the paper's contributions and scope?
    \item[] Answer: \answerYes{}
    \item[] Justification: The abstract and introduction (Section~\ref{sec:intro}) state three contributions: gauge-correct channel penalties with equilibrium recovery (Section~\ref{sec:theory}), a dense adaptive training framework (Section~\ref{sec:training}), and complexity attractors with experimental evidence including a post-hoc comparison (Section~\ref{sec:experiments}). Limitations are discussed in Section~\ref{sec:discussion}.

\item {\bf Limitations}
    \item[] Question: Does the paper discuss the limitations of the work performed by the authors?
    \item[] Answer: \answerYes{}
    \item[] Justification: Section~\ref{sec:discussion} discusses limitations including fixed attention heads, regrowth constraints through nonlinearities, non-convergence rates across seeds, and the current scale of experiments.

\item {\bf Theory assumptions and proofs}
    \item[] Question: For each theoretical result, does the paper provide the full set of assumptions and a complete (and correct) proof?
    \item[] Answer: \answerYes{}
    \item[] Justification: Propositions~\ref{prop:equilibrium}, \ref{prop:balance}, and~\ref{prop:calibration} include explicit assumptions and complete proofs in the main text (Section~\ref{sec:theory}). Appendix~\ref{app:failure_modes} develops the design space. Appendix~\ref{app:supplementary} provides supplementary theory including corrected utility for overlapping dimensions.

    \item {\bf Experimental result reproducibility}
    \item[] Question: Does the paper fully disclose all the information needed to reproduce the main experimental results of the paper to the extent that it affects the main claims and/or conclusions of the paper (regardless of whether the code and data are provided or not)?
    \item[] Answer: \answerYes{}
    \item[] Justification: Algorithm~\ref{alg:training} gives the complete training procedure. Appendix~\ref{app:exp_details} provides all architectures, hyperparameters, training configurations, ablation conditions, and the post-hoc pruning protocol. The framework will be released as open-source software.

\item {\bf Open access to data and code}
    \item[] Question: Does the paper provide open access to the data and code, with sufficient instructions to faithfully reproduce the main experimental results, as described in supplemental material?
    \item[] Answer: \answerYes{}
    \item[] Justification: Anonymized source code is included as supplemental material. PG-19 and CelebA are publicly available datasets; FF31 and retrieval use synthetic data generated at training time.

\item {\bf Experimental setting/details}
    \item[] Question: Does the paper specify all the training and test details (e.g., data splits, hyperparameters, how they were chosen, type of optimizer) necessary to understand the results?
    \item[] Answer: \answerYes{}
    \item[] Justification: The main text specifies the optimizer (Muon with coupled regularization), the equal-compute experimental design, and loss targets. Appendix~\ref{app:exp_details} provides the full configuration table (Table~\ref{tab:exp_setup}), per-task details including learning rates, batch sizes, training steps, and cooldown schedules, and the ablation conditions table (Table~\ref{tab:ablation-conditions}).

\item {\bf Experiment statistical significance}
    \item[] Question: Does the paper report error bars suitably and correctly defined or other appropriate information about the statistical significance of the experiments?
    \item[] Answer: \answerYes{}
    \item[] Justification: Tables report mean $\pm$ standard deviation over 5--25 seeds. Profile figures show trimmed 80\% confidence bands (middle range after dropping the top and bottom 2 seeds) or min/max bands, as noted in figure captions. The source of variability is the random seed (model initialization and data ordering).

\item {\bf Experiments compute resources}
    \item[] Question: For each experiment, does the paper provide sufficient information on the computer resources (type of compute workers, memory, time of execution) needed to reproduce the experiments?
    \item[] Answer: \answerYes{}
    \item[] Justification: Appendix~\ref{app:exp_details} states that all experiments were run on NVIDIA H100 GPUs (80\,GB), each using a single GPU, with a total budget of approximately 900~GPU-hours. Figure~\ref{fig:wallclock} shows per-step wall-clock times for all tasks.

\item {\bf Code of ethics}
    \item[] Question: Does the research conducted in the paper conform, in every respect, with the NeurIPS Code of Ethics \url{https://neurips.cc/public/EthicsGuidelines}?
    \item[] Answer: \answerYes{}
    \item[] Justification: The research develops a training methodology for neural network efficiency. It does not involve human subjects, deception, or dual-use concerns.

\item {\bf Broader impacts}
    \item[] Question: Does the paper discuss both potential positive societal impacts and negative societal impacts of the work performed?
    \item[] Answer: \answerNA{}
    \item[] Justification: The work is foundational research on training methodology for neural network efficiency. Reducing inference compute has broadly positive implications for energy consumption and accessibility. There is no direct path to negative applications beyond those inherent to any improvement in neural network training.

\item {\bf Safeguards}
    \item[] Question: Does the paper describe safeguards that have been put in place for responsible release of data or models that have a high risk for misuse (e.g., pre-trained language models, image generators, or scraped datasets)?
    \item[] Answer: \answerNA{}
    \item[] Justification: The paper releases a training framework, not pretrained models or scraped datasets. The experimental models are small-scale (character-level LM, masked autoencoder) with no dual-use risk.

\item {\bf Licenses for existing assets}
    \item[] Question: Are the creators or original owners of assets (e.g., code, data, models), used in the paper, properly credited and are the license and terms of use explicitly mentioned and properly respected?
    \item[] Answer: \answerYes{}
    \item[] Justification: PG-19~\citep{rae2020compressive} is used under the Apache~2.0 license. CelebA~\citep{liu2015faceattributes} is used for non-commercial research under its original license. Both are cited and licenses are stated in Appendix~\ref{app:exp_details}.

\item {\bf New assets}
    \item[] Question: Are new assets introduced in the paper well documented and is the documentation provided alongside the assets?
    \item[] Answer: \answerYes{}
    \item[] Justification: The framework is released as open-source software with documentation. The supplemental material includes source code with instructions for reproducing the main experiments.

\item {\bf Crowdsourcing and research with human subjects}
    \item[] Question: For crowdsourcing experiments and research with human subjects, does the paper include the full text of instructions given to participants and screenshots, if applicable, as well as details about compensation (if any)?
    \item[] Answer: \answerNA{}
    \item[] Justification: The paper does not involve crowdsourcing or human subjects.

\item {\bf Institutional review board (IRB) approvals or equivalent for research with human subjects}
    \item[] Answer: \answerNA{}
    \item[] Justification: The paper does not involve human subjects research.

\item {\bf Declaration of LLM usage}
    \item[] Question: Does the paper describe the usage of LLMs if it is an important, original, or non-standard component of the core methods in this research? Note that if the LLM is used only for writing, editing, or formatting purposes and does \emph{not} impact the core methodology, scientific rigor, or originality of the research, declaration is not required.
    \item[] Answer: \answerNA{}
    \item[] Justification: No LLM is a component of the core methodology. Claude Code (Anthropic) was used as a development tool for code implementation and document preparation, as noted in the Acknowledgments. The theoretical results and experimental design are entirely human-designed.

\end{enumerate}

\fi

\newpage
\appendix

\section{Design Pathologies and Their Resolutions}
\label{app:failure_modes}

Each design decision in the framework resolves a specific failure
mode.
This appendix catalogs these failure modes, progressing from
penalty alternatives through the fundamental instability of
product-norm penalties to calibration and implementation.

\paragraph{$L_2$ and $L_1$ regularization.}
$L_2$ regularization applies pressure proportional to magnitude,
so weights shrink but never reach exactly zero; it cannot drive
channels to extinction.
$L_1$ regularization drives individual weights to zero but
operates on elements, not structural slices: a tensor may become
highly sparse while remaining at full dimension, preventing
compaction into a smaller dense tensor.
Group lasso coordinates suppression across all elements of a
structural slice simultaneously.

\paragraph{The product-norm instability.}
Consider the simplest gauge-connected setting: a rank-1 operator
$P = ba^T$ contributing to $y = BAx$ along one channel, where
$b \in \mathbb{R}^m$ and $a \in \mathbb{R}^n$.
Every unitarily invariant norm of~$P$ reduces to
$\|b\|_2\|a\|_2$ (the sole singular value), so penalizing any
norm of the product is equivalent to penalizing a function of
$\|b\|\cdot\|a\|$.

Let $\Omega = f(\|b\|\cdot\|a\|)$ for any monotonically
non-decreasing~$f$.
The gradients are
\[
\nabla_a \Omega
  = f'(\|b\|\|a\|)\,\|b\|\,\frac{a}{\|a\|},
\qquad
\nabla_b \Omega
  = f'(\|b\|\|a\|)\,\|a\|\,\frac{b}{\|b\|}.
\]
The gradient magnitude on~$a$ is $f'(\cdot)\|b\|$; on~$b$ it is
$f'(\cdot)\|a\|$.
Two pathologies follow, regardless of the choice of~$f$:

\emph{Unstable equilibrium.}\quad
At $\|a\| = \|b\|$, both sides receive equal pressure.
But the equilibrium is unstable: if $\|a\| > \|b\|$ by any
amount, $b$~receives stronger pressure ($f'(\cdot)\|a\|$) and
shrinks faster, while $a$~receives weaker pressure
($f'(\cdot)\|b\|$) and shrinks more slowly.
The imbalance grows with each step.
No choice of~$f$ can make the balanced point attracting.

\emph{Vanishing gradients at zero.}\quad
As the penalty succeeds and $\|b\|\|a\| \to 0$, the instability
drives one side to zero while the other retains mass.
Say $\|a\| \to 0$ with $\|b\|$ large.
The gradient on~$b$ is $f'(\|b\|\|a\|)\cdot\|a\| \to 0$:
the penalty loses its grip on the surviving factor, leaving
dangling weights that prevent tensor compaction.

In contrast, the additive penalty
$\lambda_A\|a\| + \lambda_B\|b\|$ has gradients
$\nabla_a = \lambda_A\, a/\|a\|$ and
$\nabla_b = \lambda_B\, b/\|b\|$: constant magnitude,
independent of the partner factor.
Both sides receive persistent pressure regardless of the other's
state, and gauge balance is a \emph{stable} minimum via AM-GM
(Proposition~\ref{prop:balance}).

\paragraph{The gauge problem.}
A gauge rescaling $\alpha > 0$ changes $\|A_j\|$ and $\|B_j\|$
arbitrarily while preserving the function.
One-sided penalties (penalizing only~$A$) cannot force
coordinated removal: the unpenalized factor~$B$ retains arbitrary
norm, and gauge balance cannot emerge.
The additive symmetric penalty resolves this by applying
persistent pressure on both sides, producing importance estimates
that are gauge-invariant
(Proposition~\ref{prop:balance}c).

\paragraph{Calibration.}
Without calibration, a scalar normalization parameter (one
element) and an attention ensemble ($H(2d_k{+}d_v)$ elements)
receive the same total penalty per channel, creating per-element
pressure asymmetry that overwhelms gauge balance.
Even with per-element pressure equalized, different structural
axes free different amounts of compute per channel: a $d_k$
channel eliminates scaled dot-product attention operations with
no parameter representation, while a $d_f$ channel does not.
Efficiency-scaled calibration
(Proposition~\ref{prop:calibration}) resolves both problems by
scaling the penalty in proportion to total freed compute, which
automatically equalizes per-element RMS at gauge balance.

\paragraph{The coupling problem.}
If the penalty enters as a separate gradient step rather than
inside the optimizer, the penalty and task gradients operate in
different metrics.
Under any step-normalized optimizer (e.g., Muon, normalized
gradient methods), the task gradient receives a normalization
that the decoupled penalty does not.
The equilibrium identity
(Proposition~\ref{prop:equilibrium}) fails: the penalty cannot
overpower the task signal in the optimizer's normalized metric.
Coupled regularization
($g_{\mathrm{eff}} = g + \rho\,r$ entering the optimizer
together) restores the identity.

\paragraph{Density and timing.}
Sparse pruning zeroes channels but keeps tensors at full size;
dense slice removal produces physically smaller tensors at full
hardware utilization.
Applying compression during training rather than post-hoc allows
the network to reorganize representations into fewer dimensions,
discovering architectures that do not exist as subsets of
unconstrained training (Section~\ref{sec:posthoc}).

\medskip
The \emph{penalty} is additive
($\lambda_A\|A\| + \lambda_B\|B\|$), while the \emph{utility estimate}
at gauge balance is
$\mathrm{GM}_j = \sqrt{\lambda_A\lambda_B\|A\|\|B\|}$, a monotonic
function of the product~$\|A\|\cdot\|B\|$.
The additive penalty therefore recovers the product-of-norms ranking at
equilibrium without the vanishing-gradient pathology
(Proposition~\ref{prop:balance}).

\section{Supplementary Theory}
\label{app:supplementary}

\subsection{Corrected Utility for Overlapping Dimensions}
\label{app:corrected_harm}

When a weight tensor has multiple structural axes, penalizing
each axis independently produces overlapping slices.

Consider a weight tensor~$W$ with $A$ adaptive dimensions of
extents $d_1, \ldots, d_A$ and additional \emph{contracted
dimensions}, i.e.\ dimensions shared among every lasso group.
The \textbf{penalty table}
$P \in \mathbb{R}^{d_1 \times \cdots \times d_A}$
collapses the contracted dimensions via Frobenius norm:
\[
P[j_1, \ldots, j_A]
  = \bigl\|W[\,:\,,\; j_1, \ldots, j_A]\bigr\|_F,
\]
where the colon spans all contracted dimensions.
Each entry of~$P$ is the norm of the weight subtensor at a
single combination of adaptive indices, summed over all
dimensions that do not vary across lasso groups.

Write $P_j^{(p)}$ for the $(A{-}1)$-way slice of~$P$ at
index~$j$ along dimension~$p$:
\[
\|P_j^{(p)}\|_F
  = \sqrt{\sum_{j_1,\ldots,j_A:\, j_p = j}
      P[j_1,\ldots,j_A]^2}.
\]
The total penalty is
$\Omega = \sum_{p=1}^{A} \lambda_p
         \sum_{j=1}^{d_p} \|P_j^{(p)}\|_F$.

\paragraph{Multi-dimensional variational condition.}
The gradient of~$\Omega$ with respect to a weight element~$w$
at penalty-table index $(j_1, \ldots, j_A)$
collects contributions from all $A$ dimensions:
\[
\frac{\partial \Omega}{\partial w}
  = w \sum_{p=1}^{A}
    \frac{\lambda_p}{\|P_{j_p}^{(p)}\|_F}.
\]
At the critical point,
$\partial L / \partial w
  = -w \sum_{p}
    \lambda_p / \|P_{j_p}^{(p)}\|_F$.

\paragraph{Corrected utility.}
The first-order utility of removing index~$j$ along
dimension~$p$ is
\begin{align*}
\omega_p[j]
  &= \sum_{q=1}^{A} \lambda_q
     \sum_{j_1,\ldots,j_A:\, j_p = j}
     \frac{P[j_1,\ldots,j_A]^2}{\|P_{j_q}^{(q)}\|_F}.
\end{align*}
Separating the $q = p$ self-term from $q \neq p$ cross-terms:
\[
\omega_p[j]
  = \lambda_p \|P_j^{(p)}\|_F
  + \sum_{q \neq p} \lambda_q
    \sum_{j_1,\ldots,j_A:\, j_p = j}
    \frac{P[j_1,\ldots,j_A]^2}{\|P_{j_q}^{(q)}\|_F}.
\]
The self-term recovers the standard group-lasso utility
(Proposition~\ref{prop:equilibrium}).
The cross-terms are sums of non-negative quantities, so
$\omega_p[j] \geq \lambda_p\|P_j^{(p)}\|_F$: the
single-dimension penalty underestimates true utility when
slices overlap.

\subsection{Regrowth Constraints}
\label{app:regrowth}

Which structural axes admit function-preserving extension?
Linear axes ($d_v$, $d_k$ without RoPE) have no intervening
nonlinearity, so the full orthogonal group $O(n)$ is available:
Procrustes QR can add channels that preserve the existing
function exactly.
Nonlinear axes (those with intervening elementwise activations
such as SwiGLU or PrescaleRMS) admit only permutation matrices,
restricting function-preserving operations to channel reordering.
Table~\ref{tab:regrowth_class} classifies all structural axes.
In this paper, the conservative solver and trust-drop criterion
prevent premature removal, so regrowth is not needed in
practice.

\begin{table}[ht]
\centering
\caption{Axis classification by intervening transformation.
Linear axes admit full basis rotation, which allows
Procrustes QR growth.
Nonlinear axes admit only channel permutation.}
\label{tab:regrowth_class}
\small
\begin{tabular}{@{}llll@{}}
\toprule
Axis & Type & Commutant & Available operations \\
\midrule
$d_v$ & Linear & $O(n)$ & Procrustes QR \\
$d_k$ (no RoPE) & Linear & $O(n)$ & Procrustes QR \\
\midrule
$d_k$ (RoPE) & Nonlinear & Block $SO(2)$
  & Pair-wise rotation (deferred) \\
$d_f$ & Nonlinear & $S_n$ & Permutation only \\
$d_{ai}$, $d_{mi}$ & Nonlinear & $S_n$ & Permutation only \\
$d_{ao}$, $d_{mo}$ & Nonlinear & $S_n$ & Permutation only \\
$D$ (residual) & Nonlinear & $S_n$ & Permutation only \\
\bottomrule
\end{tabular}
\end{table}

\subsection{Cross-Count Calibration}
\label{app:cross_count}

The efficiency-scaled calibration of
Section~\ref{sec:calibration} chooses penalty weights
$\hat\lambda_A = r_j/\sqrt{n_A}$,
$\hat\lambda_B = r_j/\sqrt{n_B}$ to normalize utility by FMAs.
The RMS equalization
$\varepsilon_A = \varepsilon_B$ at gauge balance
(Proposition~\ref{prop:calibration}) depends only on the
\emph{ratio} of penalty weights, not on their absolute scale:

\begin{remark}[RMS-equalized calibration]
\label{rem:cross_count}
At gauge balance
($\lambda_A\|A_j\|_F = \lambda_B\|B_j\|_F$),
the per-element RMS values satisfy
$\varepsilon_A = \varepsilon_B$ if and only if
$\lambda_A / \lambda_B = \sqrt{n_B / n_A}$.
\end{remark}

Any shared rescaling
$\hat\lambda_A = c\sqrt{n_B}$,
$\hat\lambda_B = c\sqrt{n_A}$
preserves this ratio, and therefore RMS equalization, while
normalizing channel utility against a resource of the
practitioner's choice.
Setting $c = 1$ gives the simplest geometry-only calibration;
setting $c = r_j/\sqrt{n_A n_B}$ recovers the efficiency-scaled
weights of Section~\ref{sec:calibration}, which normalize by FMAs.
The same degree of freedom could normalize against memory
footprint, latency, or any other per-channel resource cost.

\section{Architecture Details}
\label{app:arch_details}

\subsection{Axis Inventory}
\label{app:axis_inventory}

Table~\ref{tab:axes} provides the complete structural axis
inventory.
The following paragraphs trace the penalty structure in
forward order through the transformer block.

\begin{table}[ht]
\centering
\caption{Structural axes (6 per block, plus global~$D$).
All axes use the geometric mean utility estimate
(Proposition~\ref{prop:balance}).
Each block has four bridge paths (attention/MLP $\times$
input/output), each independently calibrated; the paper
writes $d_{ai}$/$d_{mi}$ and~$d_{ao}$/$d_{mo}$ when the argument is
per-path-independent.}
\label{tab:axes}
\small
\begin{tabular}{@{}lllc@{}}
\toprule
\textbf{Axis} & \textbf{Controls}
  & \textbf{Gauge-connected factors (A;\;B)} & \textbf{GM} \\
\midrule
\multicolumn{4}{@{}l}{\textit{Per-block: attention}} \\
$d_k$ & Key/query channels
  & $W_q[:,:,j]$;\; $W_k[:,:,j]$ & Yes \\
$d_v$ & Value channels
  & $W_v[:,:,j]$;\; $W_o[:,j,:]$ & Yes \\
\midrule
\multicolumn{4}{@{}l}{\textit{Per-block: MLP}} \\
$d_f$ & Filter width
  & $W_u[:,j]$;\; $W_d[j,:]$\,$^*$ & Yes \\
\midrule
\multicolumn{4}{@{}l}{\textit{Per-block: residual interface}} \\
$d_{ai}$ & Attn input
  & $\gamma_j$;\; $\{W_q,W_k,W_v\}$$^\dagger$ & Yes \\
$d_{ao}$ & Attn output
  & $W_o[:,:,j]$;\; $\delta_j$ & Yes \\
$d_{mi}$ & MLP input
  & $\gamma_j$;\; $\{W_u,W_g\}$$^\dagger$ & Yes \\
$d_{mo}$ & MLP output
  & $W_d[:,j]$;\; $\delta_j$ & Yes \\
\midrule
\multicolumn{4}{@{}l}{\textit{Global}} \\
$D$ & Residual width
  & Injector-side scales$^\ddagger$;\;
    extractor-side scales & Yes \\
\bottomrule
\end{tabular}

\smallskip\noindent
{\footnotesize
$^*$\,$W_g[:,j]$ is gauge-disconnected on~$d_f$
(Section~\ref{sec:axes}).\quad
$^\dagger$\,Joint Frobenius within each ensemble.\quad
$^\ddagger$\,Side A: $\{\delta_\mathrm{attn},
\delta_\mathrm{mlp}\}$ and embedding scale across all blocks.
Side B: $\{\gamma_\mathrm{attn}, \gamma_\mathrm{mlp}\}$
across all blocks.}
\end{table}

\paragraph{Residual stream ($D$, $d_{ai}$, $d_{ao}$, $d_{mi}$, $d_{mo}$).}
The global axis~$D$ controls which residual sites exist.
PostScale~$\delta$ across all blocks forms one ensemble factor;
prescaled RMS~$\gamma$ across all blocks forms the other.
Input axes~$d_{ai}$/$d_{mi}$ pair $\gamma_j$ against the
sub-block's consumption operators.
Output axes~$d_{ao}$/$d_{mo}$ pair the operator weight
($W_o$ or~$W_d$) against PostScale~$\delta_j$.

\paragraph{Attention ($d_k$, $d_v$).}
$W_q$ and~$W_k$ are gauge-connected on~$d_k$: rescaling
$W_q[:,:,j] \to \alpha W_q[:,:,j]$,
$W_k[:,:,j] \to W_k[:,:,j]/\alpha$
preserves $Q^T\!K$.
Each factor is penalized independently (sum of norms, not joint
Frobenius), since either vanishing eliminates the channel.
$W_v$ and~$W_o$ are gauge-connected on~$d_v$.
With RoPE, adjacent channel pairs form the structural unit
on~$d_k$.

\paragraph{SwiGLU MLP ($d_f$).}
$W_u$ and~$W_d$ are gauge-connected on~$d_f$.
$W_g$ is gauge-disconnected: zeroing $W_u[:,j]$ kills the
output regardless of~$W_g[:,j]$.
$W_g$ is not penalized on~$d_f$ but joins the~$d_{mi}$
ensemble factor.

\subsection{Freed Resources}
\label{app:freed}

Table~\ref{tab:freed} lists the per-index freed resources for
each structural axis.
The \emph{penalized} column names the gauge-connected factors
(side~A;\;side~B) whose norms enter the penalty;
$n_A$ and~$n_B$ are the element counts of the structural slices;
the efficiency-scaled calibration
(Section~\ref{sec:calibration}) sets
$\hat\lambda_A = r_j/\sqrt{n_A}$,
$\hat\lambda_B = r_j/\sqrt{n_B}$.
The \emph{coupled} column names gauge-disconnected tensors that
share the axis and are freed when an index is dropped but do not
participate in the penalty.
The $r_{\mathrm{compute}}$ column counts per-token multiply-add
savings from all eliminated operations.
Terms involving~$S$ (sequence length) require a target length to
evaluate; for long sequences these dominate.

\begin{table}[ht]
\centering
\caption{Freed resources per dropped index (per block, per token).}
\label{tab:freed}
\small
\begin{tabular}{@{}llllll@{}}
\toprule
\textbf{Axis} & \textbf{Penalized (A;\;B)}
  & $n_A$ & $n_B$
  & \textbf{Coupled}
  & $r_{\mathrm{compute}}$ \\
\midrule
\multicolumn{6}{@{}l}{\textbf{Attention}} \\
$d_{ai}$ & $\gamma$;\; $\{W_q,W_k,W_v\}$
  & $1$ & $H(2d_k{+}d_v)$
  &
  & $1 + H(2d_k{+}d_v)$ \\
$d_k$ & $W_q$;\; $W_k$
  & $Hd_{ai}$ & $Hd_{ai}$
  &
  & $H(2d_{ai}{+}S)$ \\
$d_v$ & $W_v$;\; $W_o$
  & $Hd_{ai}$ & $Hd_{ao}$
  &
  & $H(d_{ai}{+}d_{ao}{+}S)$ \\
$d_{ao}$ & $W_o$;\; $\delta$
  & $Hd_v$ & $1$
  &
  & $Hd_v + 1$ \\
$H^\dagger$ & $W_v$;\; $W_o$
  & $d_{ai} d_v$ & $d_v d_{ao}$
  & $W_q,\,W_k$
  & $r_H\!$ \\
\midrule
\multicolumn{6}{@{}l}{\textbf{SwiGLU MLP}} \\
$d_{mi}$ & $\gamma$;\; $\{W_u,W_g\}$
  & $1$ & $2d_f$
  &
  & $1 + 2d_f$ \\
$d_f$ & $W_u$;\; $W_d$
  & $d_{mi}$ & $d_{mo}$
  & $W_g$
  & $2d_{mi}{+}d_{mo}$ \\
$d_{mo}$ & $W_d$;\; $\delta$
  & $d_f$ & $1$
  &
  & $d_f + 1$ \\
\bottomrule
\end{tabular}

\smallskip\noindent
{\footnotesize\flushleft
$^\dagger$\,Fixed in this work; included for completeness.
$r_H = d_{ai}(2d_k{+}d_v) + S(d_k{+}d_v) + d_v d_{ao}$.
With RoPE, $d_k$ channels come in pairs.
The global axis~$D$ is omitted: its freed resources are
architecture-dependent sums over blocks.}
\end{table}

\subsection{Prescaled RMS Normalization}
\label{app:prewrms}

Prescaled RMS normalization (Section~\ref{sec:setting}) serves
three roles in the penalty structure:
(1)~\textbf{Penalty attachment:} $\gamma$ provides the
gauge-connected factor for~$d_{ai}$ and~$d_{mi}$, and across all blocks forms
the extractor-side ensemble for~$D$.
(2)~\textbf{Gauge mediation:} prescaling by~$\gamma$ before the
denominator restores the residual stream gauge that standard RMS
normalization breaks.
(3)~\textbf{Seamless exit:} as $\gamma_j \to 0$, the
$\|\gamma\|^2$ denominator ensures surviving channels maintain
their normalization.

\paragraph{Scope of the gauge-Sinkhorn invariance.}
Role~(2) is also where the rescaling of
Section~\ref{sec:training} stops being exact.
On a purely bilinear coupling, $A_j \leftarrow c A_j$,
$B_j \leftarrow B_j/c$ cancels identically.
Where the coupling passes through a prescaled-RMS output --- the input
bridges, the final bridge, and the residual stream~$D$, whose consumers
are themselves bridge gammas --- the normalizer depends on $\gamma$
through both the numerator and the sum of squares.
A \emph{global} rescale therefore cancels and is exactly undone by the
consumers, which is why anchoring the residual stream is exact, while a
\emph{per-channel} rescale leaves one data-dependent scalar gain on that
block's output.
The approximation is not observed to matter in practice, but the
invariance should be read as exact on the bilinear couplings and
first-order elsewhere.

\subsection{Procrustes Growth}
\label{app:procrustes}

For gauge-connected pairs with no intervening nonlinearity
($d_v$, $d_k$ without RoPE), the full gauge freedom includes the
orthogonal group~$O(d)$, not just scalar rescalings.
Procrustes QR growth exploits this freedom to grow an axis
from $d$ to $d{+}k$ dimensions.
A random $(d{+}k) \times d$ orthonormal matrix~$\hat{Q}$ is
Procrustes-aligned so that existing channels are minimally
perturbed (scaled by entries of
$\Sigma_{\hat{Q}} \approx 1 - O(k/d)$) while new channels have
$O(1)$ parameter norms and zero net output contribution.
Training immediately differentiates the new channels.
Any function-preserving reparameterization $A' = MA$
requires momentum to transform as $m_A' = M^{-T} m_A$
to preserve the optimizer's gradient basis.
Gauge Sinkhorn is the scalar case where $M = cI$ and $m_A' = m_A/c$).
Procrustes growth is the isometric case,
for which weight and momentum transform identically.

\section{Experimental Details}
\label{app:exp_details}

All experiments were run on NVIDIA H100 GPUs (80\,GB).
Each run uses a single GPU.
The total compute budget across all reported experiments
(including ablations, post-hoc comparisons, and the 24-layer
extension) is approximately 500~GPU-hours on H100; step times
decrease as models compress, so the effective cost is
substantially less than the initial per-step rate would suggest.
Table~\ref{tab:exp_setup} summarizes the architecture and
training configuration for each task.

\paragraph{Datasets.}
PG-19~\citep{rae2020compressive} is used under the Apache~2.0
license.
CelebA~\citep{liu2015faceattributes} is used for non-commercial
research under its original license.
The FF31 polynomial division and retrieval tasks use synthetic
data generated at training time.

\paragraph{Structural profiles.}
For each compression profile figure, the first seven panels plot axis
extent, the number of active channels, against transformer depth for the
seven adaptive axis types
($d_{ai}$, $d_k$, $d_v$, $d_{ao}$, $d_{mi}$, $d_f$, $d_{mo}$).
Line color encodes training step, and the hatched band gives the full
range across seeds at convergence unless a caption says otherwise.
The eighth panel runs against training step instead, carrying $D$ and
the classifier input width~$d_c$ on the left axis and total FMA on the
right.
Extents use a $\log_2$ scale to resolve both large initial and small
converged values.

\begin{table}[ht]
\centering
\caption{
Experimental configurations.
Cells with several entries follow the order of the $L^*$ row.
Two stopping rules are in play.
Conditions with an FMA target (Char LM 12L, CelebA $d_k{=}96$) end
one \emph{Saturation} phase after first meeting both that target
and the loss target, annealing the learning rate over the phase, so
their \emph{step cap} never binds.
The rest set no FMA target: they run to the cap and anneal over the
\emph{Cooldown} instead.
\emph{Realized steps} is the median over seeds of what actually ran.
\emph{Train FMA} is the per-token inference FMA summed over
realized steps, in units of $10^{11}$: a proxy for training cost
that is comparable within a column but not across columns, since
tokens per step differ by task.
On the same scale the baselines cost 2.4 (Char LM) and 7.4
(CelebA).}
\label{tab:exp_setup}
\small
\begin{tabular}{@{}llllll@{}}
\toprule
& \textbf{FF31} & \textbf{Char LM 12L} & \textbf{Char LM 24L} & \textbf{CelebA MAE} & \textbf{Retrieval} \\
\midrule
Task & Poly.\ division & Next-char & Next-char & Masked AE & Mod-arith.\ retr.\ \\
Baseline arch.\ & --- & 6L/6H & --- & 6L/8H & --- \\
Baseline $D$ & --- & 288 & --- & 384 & --- \\
Baseline FMA & --- & 7.6M & --- & 12.1M & --- \\
Baseline steps & --- & 31.8k & --- & 61.2k & --- \\
Adaptive arch.\ & 8L/8H & 12L/6H & 24L/8H & 12L/8H & 4L/4H \\
Adaptive $D$    & 192 & 576 & 480 & 768 & 128 \\                                       
Adaptive FMA & 13.2M & 53.7M & 113M & 90.2M (145.2M$^\dagger$) & 1.9M \\
$d_k$ / $d_v$ & 48 / 48 & 96 / 96 & 90 / 90 & 96 / 96 (256$^\dagger$) & 32 / 32 \\
$L^*$ & 0.01, 0.005 & $\ln 3$, $\ln 4$ & $\ln 2.5$ & 0.32, 0.40 (0.24$^\dagger$) & 0.05 \\
LR          & 1e-3 & 1e-3 & 5e-4 & 1e-3 (5e-4$^\dagger$) & 1e-3 \\
LR$_{\min}$ & 1e-4 & 1e-4 & 1e-4 & 1e-4 & 1e-4 \\
Weight decay & 0 & 0 & 0.01 & 0 (0.01$^\dagger$) & 0 \\
$\beta_{\mathrm{loss}}$ & 0 & 0 & 0.9 & 0 (0.9$^\dagger$) & 0 \\
Batch size & 128 & 32$\times$3 & 48$\times$2 & 32$\times$3 & 32 \\
Warmup & 500 & 500 & 500 & 500 & 500 \\
Saturation & --- & 30k & --- & 60k (---$^\dagger$) & --- \\
Cooldown & 10k, 10k & 0  & 30k & 0 (30k$^\dagger$) & 5k \\
Step cap & 40k, 70k & 1M & 100k & 100k (100k$^\dagger$) & 30k \\
Realized steps & 40k, 70k & 47.7k, 31.9k & 100k & 63.4k, 61.7k & 30k \\
Train FMA & 0.50, 0.52 & 6.0, 2.9 & 76 & 9.0, 7.0 (94$^\dagger$) & 0.11 \\
$\rho$-solver & Conservative & Conservative & Conservative & Conservative & Conservative \\
Sinkhorn & Yes & Yes & Yes & Yes & Yes \\
Exchange & No & No & No & No & Yes \\
Seeds & 5, 25 & 5 & 5 & 5 (4$^\dagger$) & 5 \\
\bottomrule
\multicolumn{6}{@{}l}{\footnotesize $^\dagger$CelebA $d_k{=}256$ condition (Section~\ref{sec:exp_equal}).}\\
\end{tabular}
\end{table}

\subsection{FF31 Polynomial Division}
\label{app:ff31_setup}
\label{app:ff31_profiles}

The task requires polynomial long division of cubics by linears
over~$\mathbb{Z}/31\mathbb{Z}$.
Each example requires three rounds of divide, multiply, subtract,
and accumulate; the model produces all intermediate steps as a
fixed-length sequence (283~tokens, vocab~44, 254~supervised
positions).
Ten percent of training examples are replaced with arithmetic
drill sequences to maintain operation accuracy during structural
compression.
The architecture uses $d_k{=}d_v{=}48$ and
$d_{\mathrm{mlp}}{=}2048$.
Without penalty, the model reaches 100\% token accuracy and
exact-match by step~4k.

Figure~\ref{fig:ff31_profiles} shows the structural profiles
under both loss targets.
Across seeds, the converged architectures retain at most three
attention blocks (often discarding block~0) and show a consistent
dip at MLP block~2, suggesting this position contributes less
to the polynomial division task.
Training details are in Table~\ref{tab:exp_setup}.

\begin{figure}[p]
\centering
\begin{subfigure}{\textwidth}
\caption{FF31, $L^*{=}0.01$ (30k compression + 10k cooldown).}
\label{fig:ff31_A}
\centering
\includegraphics[width=\textwidth]{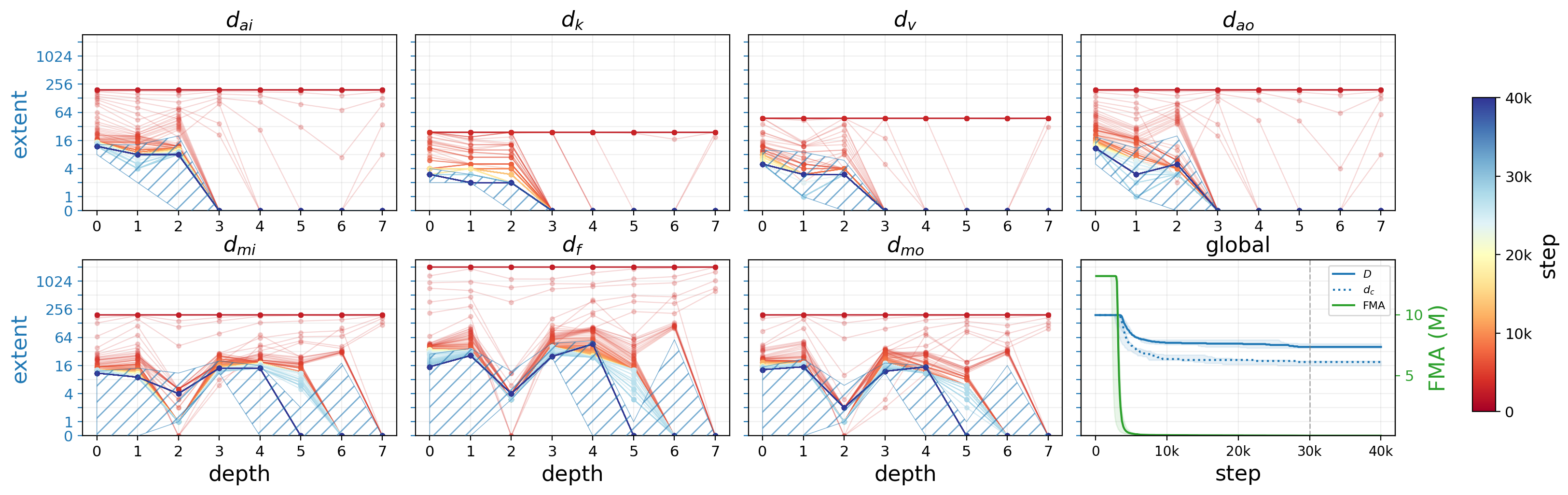}
\end{subfigure}
\begin{subfigure}{\textwidth}
\caption{FF31, $L^*{=}0.005$ (60k compression + 10k cooldown).}
\label{fig:ff31_B}
\centering
\includegraphics[width=\textwidth]{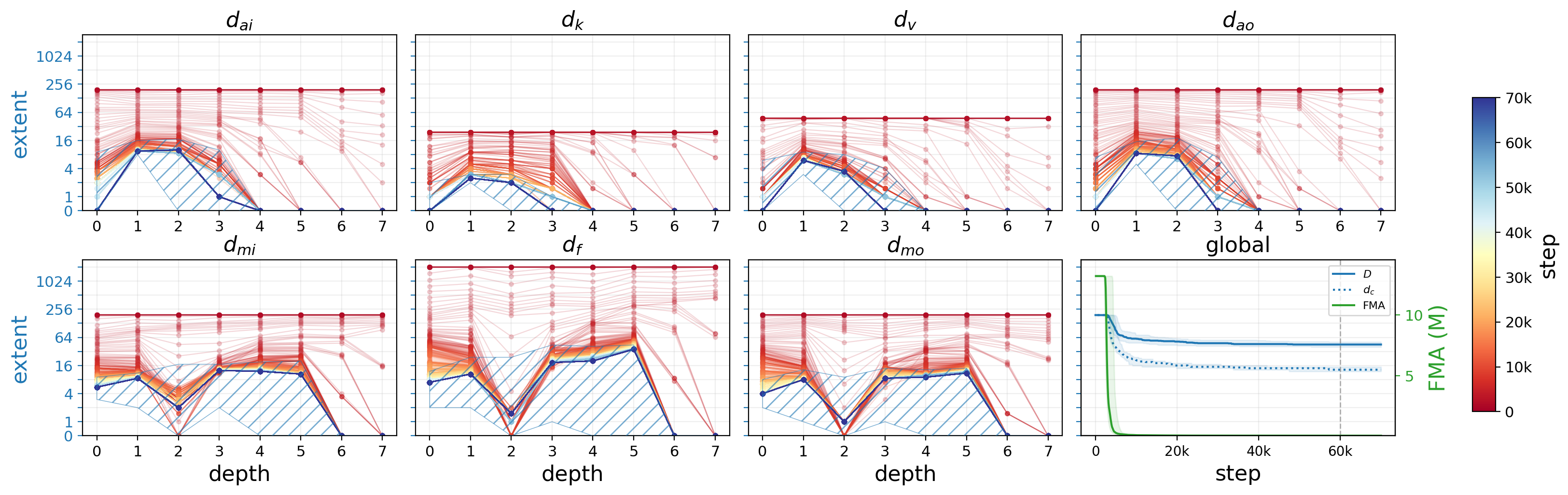}
\end{subfigure}
\caption{FF31 structural profiles.
\textbf{(a)}~Three of five seeds converge to ${\sim}63$k~FMA
with perfect accuracy; the wider loss margin permits one seed
to over-compress.
\textbf{(b)}~18 of 25 seeds converge to ${\sim}70$k~FMA;
the tighter target produces more stable compression
(hatched band: middle 78\% of converged seeds).}
\label{fig:ff31_profiles}
\end{figure}

\subsection{Character-Level Language Modeling}
\label{app:lm_setup}
\label{app:text_transformer_profiles}
\label{app:24L_profiles}

Next-character prediction on PG-19 (byte-level, vocab size~256),
causal attention with RoPE, sequence length~512,
batch size~$32{\times}3$ (gradient accumulation),
Muon optimizer with LR~$10^{-3}$ cosine-decayed to~$10^{-4}$,
500-step warmup.
The baseline (6L/6H, $D{=}288$, $d_k{=}d_v{=}48$,
$d_{\mathrm{mlp}}{=}768$; 7.6M~FMA) trains without penalty.
The adaptive architecture (12L/6H, $D{=}576$,
$d_k{=}d_v{=}96$, $d_{\mathrm{mlp}}{=}1536$; 53.7M~FMA)
compresses until $\texttt{total\_fma} \leq 7.6\text{M}$,
at which point the architecture freezes and training continues
at fixed structure.
Every run then trains for a further 30k~steps, annealing
to~$10^{-4}$; realized lengths are in Table~\ref{tab:exp_setup}.
Matching FMA matches inference cost, not training cost: reaching
its architecture costs the $L^*{=}\ln 3$ condition about
$2.5{\times}$ the baseline's training compute.

To test the framework at greater depth, we also train a 24-layer
model ($D{=}480$, 8~heads, $d_k{=}d_v{=}90$,
$d_{\mathrm{mlp}}{=}1920$; 113M~FMA) with $L^*{=}\ln 2.5$ for
70k~compression steps followed by 30k~cooldown.
The framework compresses to mean $51.9$M~FMA ($2.2{\times}$) at
$2.335 \pm 0.025$~ppl over five seeds.
The compression phase was limited to 70k~steps and further compression
is expected with a longer budget.
Note the more conservative training settings used for the 24L
experiments (Table~\ref{tab:exp_setup}).

Figure~\ref{fig:text_transformer_profiles} shows the structural
profiles for all three conditions.
Tighter loss targets produce smoother, more consistent profiles.
In the 24L model, whole sublayers are removed, but sparsely:
across the five seeds, 0--2 of the 48 attention and MLP
sublayers are eliminated outright (two MLP blocks in one seed, one
attention and one MLP block in another, none in the remaining three).
The FMA that vanishes from the profiles is therefore mostly
vestigial width rather than removal.
What is consistent across seeds sits at the front of the network:
in every converged seed the first attention block is reduced to
$d_k \in [6, 17]$ of~45 and $d_v \in [1, 4]$ of~90, and the first
MLP to $d_f \in [68, 191]$ of~1920, while the second layer still
carries hundreds of residual-stream channels.
In all conditions, the initial attention and MLP blocks are
dramatically narrowed (visually understated due to the log scale of the extent axis).
Table~\ref{tab:per_layer} reports the converged extents of seed~42
directly, on every axis of every layer.

\begin{figure}[p]
\centering
\begin{subfigure}{\textwidth}
\caption{12-layer, $L^*{=}\ln 4$ ($7.5{\times}$ compression, ppl~2.75).}
\label{fig:text_transformer_ppl4}
\centering
\includegraphics[width=\textwidth]{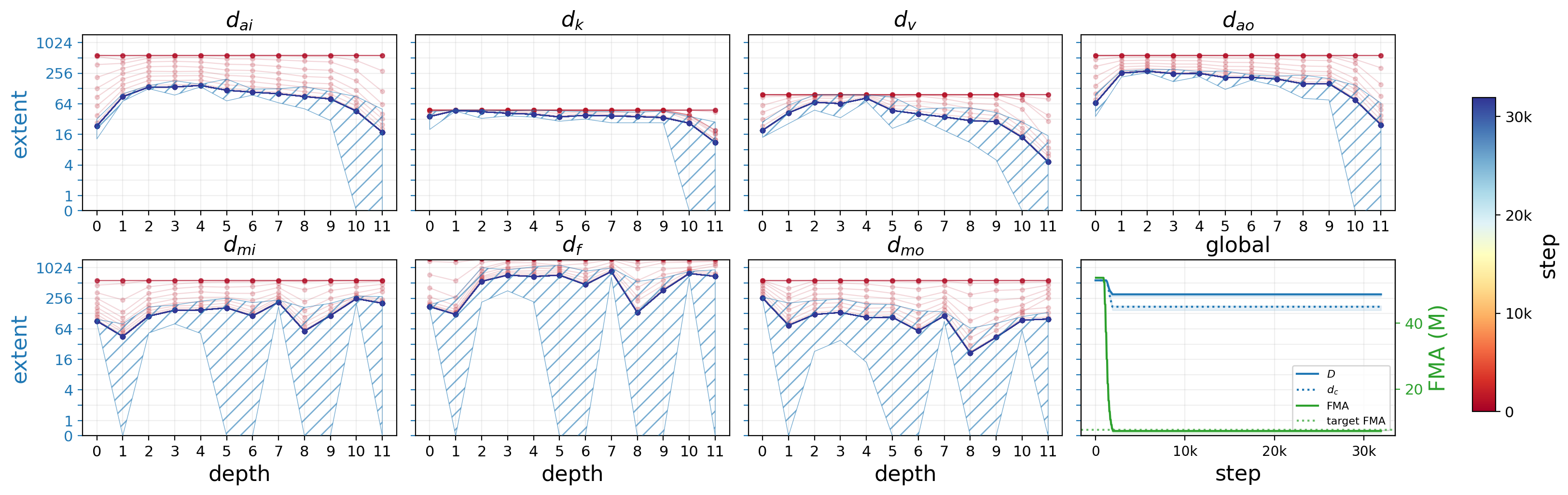}
\end{subfigure}
\begin{subfigure}{\textwidth}
\caption{12-layer, $L^*{=}\ln 3$ ($7.1{\times}$ compression, ppl~2.61).}
\label{fig:text_transformer_ppl3}
\centering
\includegraphics[width=\textwidth]{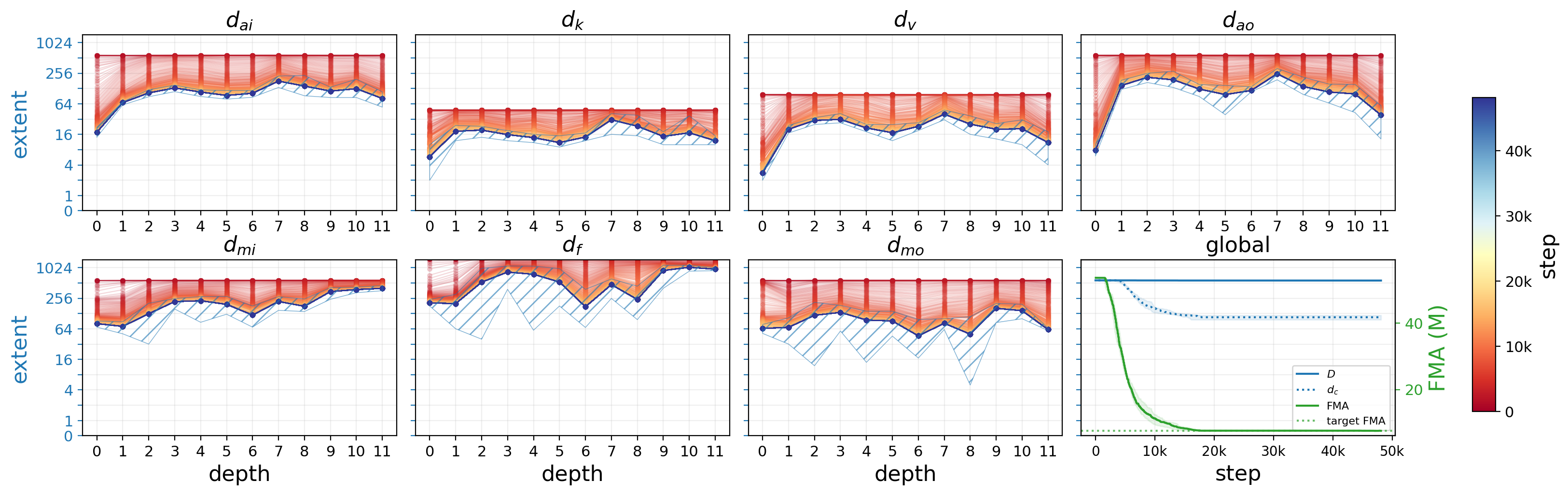}
\end{subfigure}
\begin{subfigure}{\textwidth}
\caption{24-layer, $L^*{=}\ln 2.5$ ($2.2{\times}$ compression, ppl~2.34).
  Grey dashed line marks the start of cooldown at step~70k.}
\label{fig:tt24L}
\centering
\includegraphics[width=\textwidth]{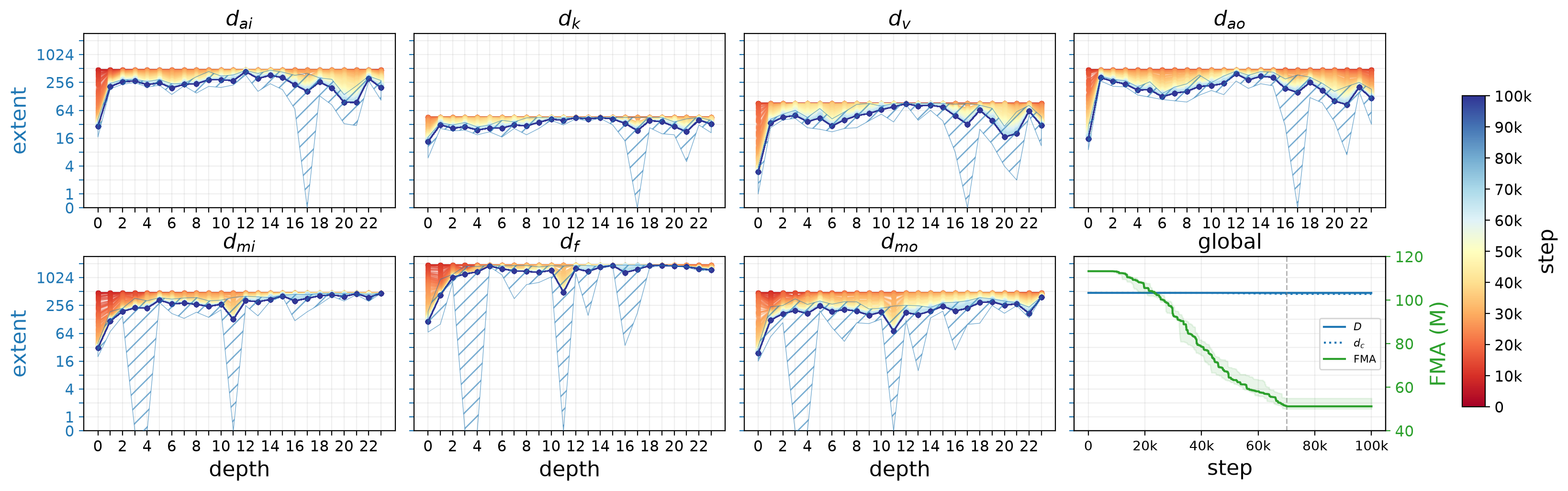}
\end{subfigure}
\caption{Character-level LM structural profiles.
\textbf{(a)}~The loose target compresses early, at the cost of
representational quality.
\textbf{(b)}~The tighter target beats the baseline, and deeper layers
retain more capacity.
\textbf{(c)}~At 24~layers the model narrows channel dimensions across
depth while retaining every layer.}
\label{fig:text_transformer_profiles}
\end{figure}

\begin{table}[htbp]
\centering
\caption{Per-layer structure of the discovered architectures, seed~42;
  seed-to-seed spread is in Figure~\ref{fig:text_transformer_profiles}.
  Each entry is the number of surviving channels on that axis;
  \emph{---} marks an operator removed entirely.
  Ceilings are \perlayerCeilA\ and \perlayerCeilB\ respectively, with
  $d_k$ halved because RoPE pairs share a frequency (Table~\ref{tab:axes}).}
\label{tab:per_layer}
\scriptsize
\begin{tabular}{@{}r@{\hskip 5pt} rrrrr@{\hskip 3pt}rrr@{\hskip 9pt} rrrrr@{\hskip 3pt}rrr@{}}
\toprule
& \multicolumn{8}{c}{\emph{12L, $L^*{=}\ln 3$}} & \multicolumn{8}{c}{\emph{24L, $L^*{=}\ln 2.5$}} \\
\cmidrule(lr){2-9}\cmidrule(lr){10-17}
& \multicolumn{5}{c}{Attention} & \multicolumn{3}{c}{MLP} & \multicolumn{5}{c}{Attention} & \multicolumn{3}{c}{MLP} \\
\cmidrule(lr){2-6}\cmidrule(lr){7-9}\cmidrule(lr){10-14}\cmidrule(lr){15-17}
Layer & $d_{ai}$ & $H$ & $d_k$ & $d_v$ & $d_{ao}$ & $d_{mi}$ & $d_f$ & $d_{mo}$ & $d_{ai}$ & $H$ & $d_k$ & $d_v$ & $d_{ao}$ & $d_{mi}$ & $d_f$ & $d_{mo}$ \\
\midrule
0 & 19 & 6 & 10 & 3 & 8 & 82 & 212 & 78 & 30 & 8 & 15 & 4 & 15 & 31 & 120 & 27 \\
1 & 68 & 6 & 24 & 24 & 186 & 78 & 250 & 63 & 187 & 8 & 26 & 27 & 276 & 70 & 126 & 61 \\
2 & 121 & 6 & 21 & 31 & 211 & 204 & 900 & 187 & 257 & 8 & 23 & 40 & 296 & 217 & 1201 & 256 \\
3 & 111 & 6 & 15 & 28 & 134 & 157 & 384 & 58 & 262 & 8 & 27 & 43 & 211 & --- & --- & --- \\
4 & 118 & 6 & 15 & 25 & 148 & 270 & 894 & 105 & 245 & 8 & 28 & 39 & 202 & 270 & 1672 & 216 \\
5 & 79 & 6 & 10 & 15 & 76 & 172 & 364 & 78 & 258 & 8 & 26 & 42 & 161 & 308 & 1829 & 237 \\
6 & 117 & 6 & 15 & 25 & 113 & 134 & 226 & 69 & 164 & 8 & 20 & 30 & 133 & 308 & 1792 & 215 \\
7 & 172 & 6 & 34 & 41 & 222 & 257 & 613 & 89 & 244 & 8 & 36 & 29 & 117 & 348 & 1818 & 246 \\
8 & 147 & 6 & 30 & 26 & 110 & 148 & 129 & 5 & 210 & 8 & 19 & 36 & 144 & 355 & 1800 & 283 \\
9 & 85 & 6 & 10 & 13 & 66 & 404 & 1141 & 164 & 211 & 8 & 33 & 41 & 152 & 142 & 804 & 104 \\
10 & 113 & 6 & 10 & 23 & 102 & 329 & 873 & 101 & 352 & 8 & 44 & 79 & 255 & 218 & 1193 & 149 \\
11 & 55 & 6 & 11 & 4 & 13 & 454 & 1023 & 59 & 245 & 8 & 38 & 78 & 256 & --- & --- & --- \\
12 &  &  &  &  &  &  &  &  & 460 & 8 & 45 & 90 & 403 & 254 & 1264 & 123 \\
13 &  &  &  &  &  &  &  &  & 165 & 8 & 38 & 39 & 191 & 333 & 1704 & 201 \\
14 &  &  &  &  &  &  &  &  & 423 & 8 & 45 & 90 & 437 & 378 & 1799 & 204 \\
15 &  &  &  &  &  &  &  &  & 179 & 8 & 31 & 38 & 217 & 404 & 1862 & 218 \\
16 &  &  &  &  &  &  &  &  & 246 & 8 & 40 & 52 & 212 & 204 & 843 & 123 \\
17 &  &  &  &  &  &  &  &  & 403 & 8 & 45 & 90 & 372 & 440 & 1903 & 323 \\
18 &  &  &  &  &  &  &  &  & 303 & 8 & 45 & 79 & 333 & 464 & 1905 & 347 \\
19 &  &  &  &  &  &  &  &  & 237 & 8 & 39 & 39 & 179 & 450 & 1885 & 350 \\
20 &  &  &  &  &  &  &  &  & 33 & 8 & 19 & 4 & 36 & 442 & 1841 & 292 \\
21 &  &  &  &  &  &  &  &  & 105 & 8 & 24 & 17 & 100 & 469 & 1775 & 295 \\
22 &  &  &  &  &  &  &  &  & 341 & 8 & 38 & 61 & 202 & 398 & 1589 & 183 \\
23 &  &  &  &  &  &  &  &  & 230 & 8 & 35 & 31 & 138 & 473 & 1527 & 379 \\
\bottomrule
\end{tabular}

\end{table}

\subsection{CelebA Masked Autoencoder}
\label{app:celeba_setup}
\label{app:celeba_profiles}

Our masked autoencoder for CelebA uses $128{\times}128$ images with
patch size~8 and 75\% masking so that 256 of 1024 patches remain visible.
The baseline
(6L/8H, $D{=}384$, $d_k{=}d_v{=}48$, $d_{\mathrm{mlp}}{=}1024$; 12.1M~FMA)
trains without penalty.
The adaptive architecture
(12L/8H, $D{=}768$, $d_k{=}d_v{=}96$, $d_{\mathrm{mlp}}{=}2048$; 90.2M~FMA)
compresses until $\texttt{total\_fma} \leq 12.1\text{M}$.
As on the language task, each run then trains for a further 60k~steps
at frozen structure, annealing to~$10^{-4}$, so the configured
100k~cap does not bind (realized lengths in Table~\ref{tab:exp_setup}).
A follow-up run uses $d_k{=}d_v{=}256$ (145.2M~FMA)
with $L^*{=}0.24$, informed by the $d_k$ saturation observed
in the initial runs.
That condition disables the terminal phase and instead runs the
full 100k~steps with a 30k cosine cooldown (4~seeds).
Batch size~$32{\times}3$ (gradient accumulation),
Muon optimizer with LR~$10^{-3}$ cosine-decayed to~$10^{-4}$,
500-step warmup, weight decay $10^{-2}$ on Adam-optimized
parameters only.
The $d_k{=}256$ run uses more conservative settings
(Table~\ref{tab:exp_setup}).

Figure~\ref{fig:celeba_profiles} shows the structural profiles for all conditions.
The looser target ($L^*{=}0.40$) also shows $d_k$/$d_v$ saturation
but drops structural complexity that the tighter target
($L^*{=}0.32$) retains, illustrating how only a sufficiently
stringent target reveals what the network truly needs.
Across conditions, early attention blocks are significantly
suppressed while MLP complexity stays fairly constant,
suggesting that larger MLPs or embedding dimension~$D$ might be beneficial.
In the $d_k{=}256$ run, $d_k$ and~$d_v$ show a notable ascending trend with depth.

\begin{figure}[p]
\centering
\begin{subfigure}{\textwidth}
\caption{CelebA MAE, $L^*{=}0.40$, $d_k{=}96$ ($9.4{\times}$ compression, loss~0.251).}
\label{fig:celeba_loss040}
\centering
\includegraphics[width=\textwidth]{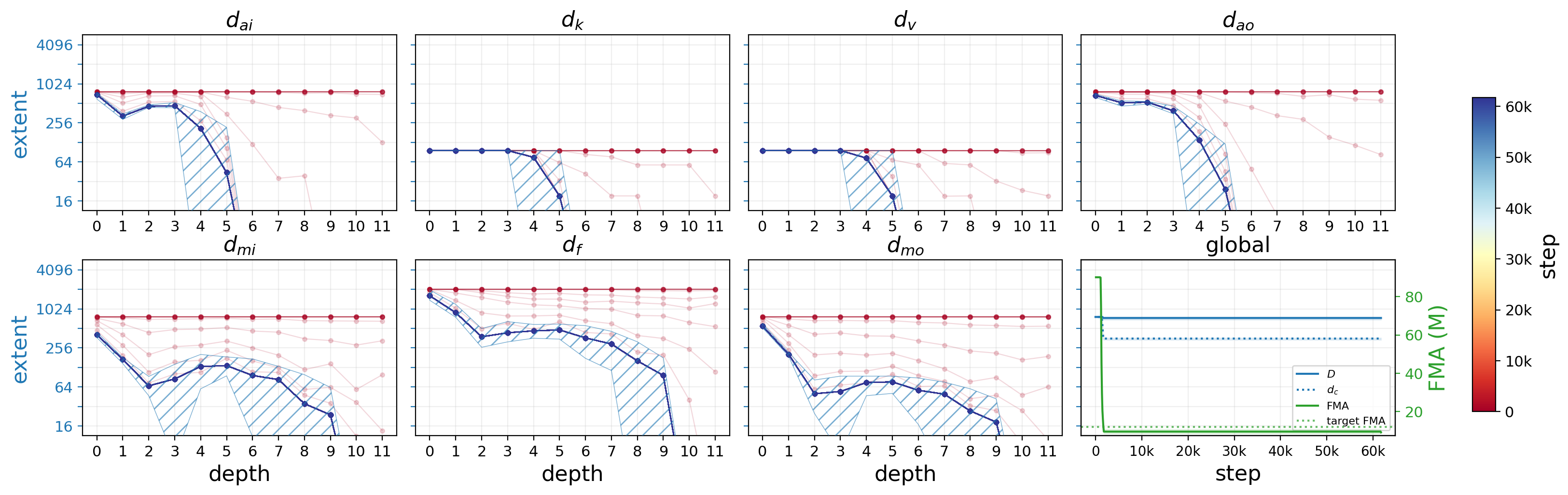}
\end{subfigure}
\begin{subfigure}{\textwidth}
\caption{CelebA MAE, $L^*{=}0.32$, $d_k{=}96$ ($8{\times}$ compression, loss~0.237).}
\label{fig:celeba_loss032}
\centering
\includegraphics[width=\textwidth]{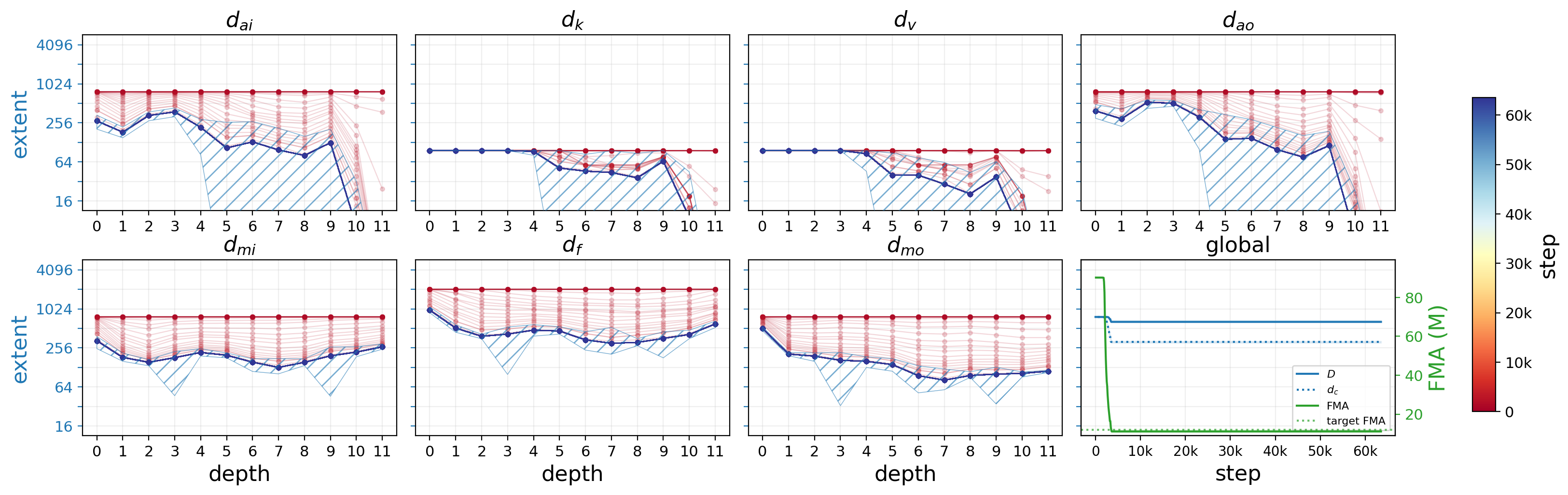}
\end{subfigure}
\begin{subfigure}{\textwidth}
\caption{CelebA MAE, $L^*{=}0.24$, $d_k{=}256$ ($2.9{\times}$ compression, loss~0.213; 4~seeds).}
\label{fig:celeba_dk256}
\centering
\includegraphics[width=\textwidth]{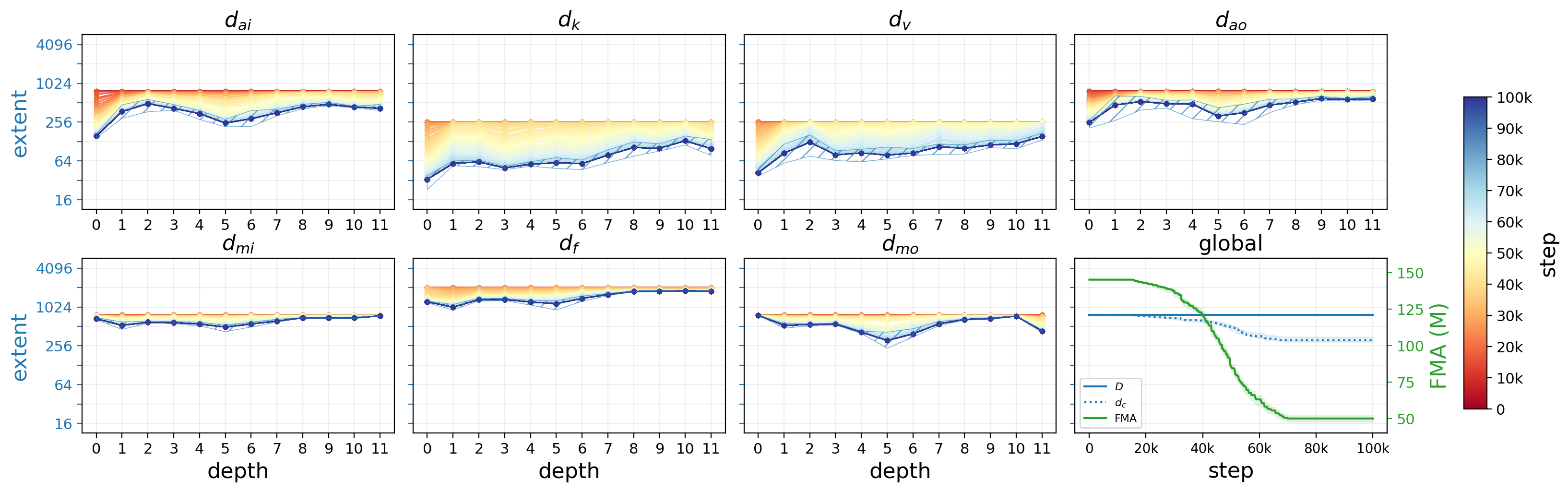}
\end{subfigure}
\caption{CelebA MAE structural profiles.
\textbf{(a)}~The loose target compresses aggressively at the cost of
higher loss.
\textbf{(b)}~The tighter target recovers baseline quality, with all
12~layers surviving.
\textbf{(c)}~$d_k{=}d_v{=}256$, informed by the saturation in~(a)
and~(b), lands below the baseline loss of~0.236.}
\label{fig:celeba_profiles}
\end{figure}

\subsection{Retrieval Task, Ablations, and Structural Profiles}
\label{app:retrieval_setup}
\label{app:ablation_conditions}

The retrieval task uses a vocabulary of 4096~tokens with a
modular-arithmetic index (mod~61) and a sentinel class for
positions where the target is undefined, giving $n_{\mathrm{classes}}
= 4097$.  The model is a causal transformer with RoPE positional
encoding: $D{=}128$, 4~layers, 4~attention heads,
$d_k{=}d_v{=}32$, $d_{\mathrm{mlp}}{=}256$, sequence
length~192, batch size~32.  Training uses the Muon optimizer with
coupled regularization and efficiency-scaled calibration
(Section~\ref{sec:calibration}).  The loss target is
$L^*{=}0.05$, with 30\,000 total steps and a 5\,000-step cooldown
during which the penalty is disabled and no structural changes
occur.  Structural events are spaced at intervals of 100~steps
(trust-drop) and 500~steps (exchange and gauge Sinkhorn).

Each ablation deviates from the baseline in exactly one dimension.
Five seeds (42--46) per condition; 35 runs total.
Table~\ref{tab:ablation-conditions} summarizes the conditions;
full training settings appear in Table~\ref{tab:exp_setup}.

\begin{table}[ht]
\centering
\caption{Ablation conditions.  The baseline uses the conservative
$\rho$-solver, gauge Sinkhorn (before every drop check),
exchange (500-step cadence), and trust multiplier $k{=}1$.
Each row shows the single deviation from baseline.}
\label{tab:ablation-conditions}
\small
\begin{tabular}{lp{7.8cm}}
\toprule
Condition & Deviation from baseline \\
\midrule
Baseline        & --- \\
No Sinkhorn     & Gauge Sinkhorn disabled \\
No Exchange     & Channel exchange disabled (\texttt{exchange\_steps}${}=0$) \\
Sign Step       & $\rho$-solver switched to sign-step \\
Trust $k{=}4$   & Trust multiplier $k$ raised from 1 to 4 \\
Trust $k{=}16$  & Trust multiplier $k$ raised from 1 to 16 \\
\bottomrule
\end{tabular}
\end{table}

Figures~\ref{fig:retriever_part1}--\ref{fig:retriever_part2} show
the per-axis structural profiles discovered by each ablation
condition.

\begin{figure}[p]
\centering
\begin{subfigure}{\textwidth}
\caption{Baseline: Sinkhorn + exchange + conservative solver (203k~FMA).}
\label{fig:retriever_B}
\centering
\includegraphics[width=\textwidth]{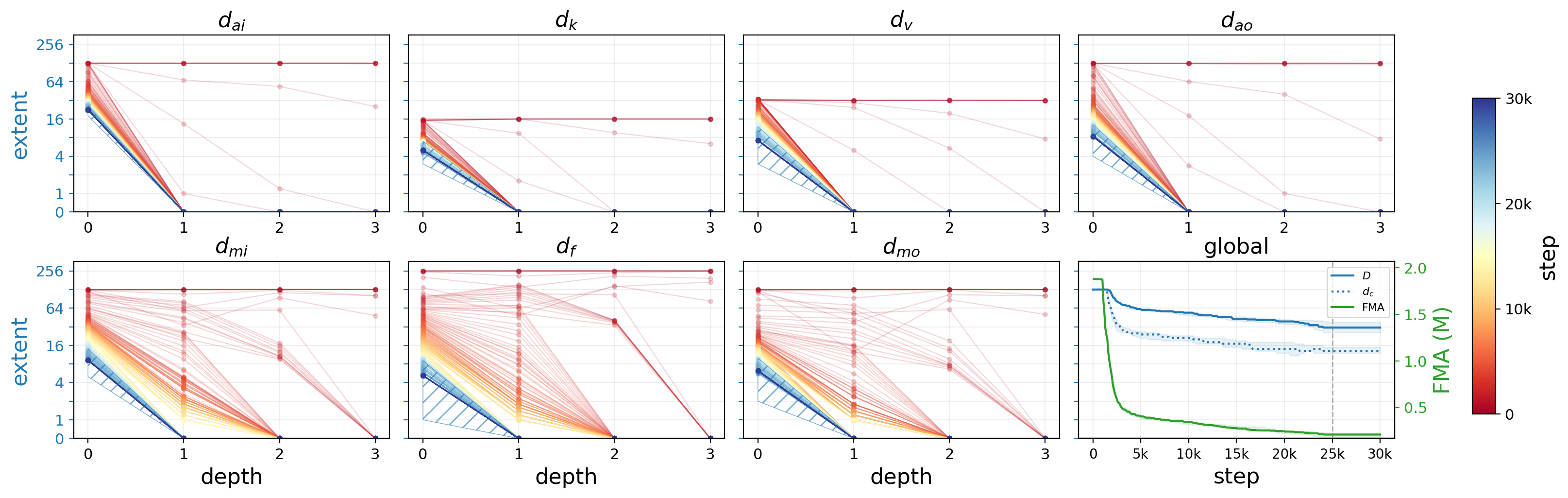}
\end{subfigure}
\begin{subfigure}{\textwidth}
\caption{No gauge Sinkhorn (255k~FMA, $+26\%$).}
\label{fig:retriever_1a}
\centering
\includegraphics[width=\textwidth]{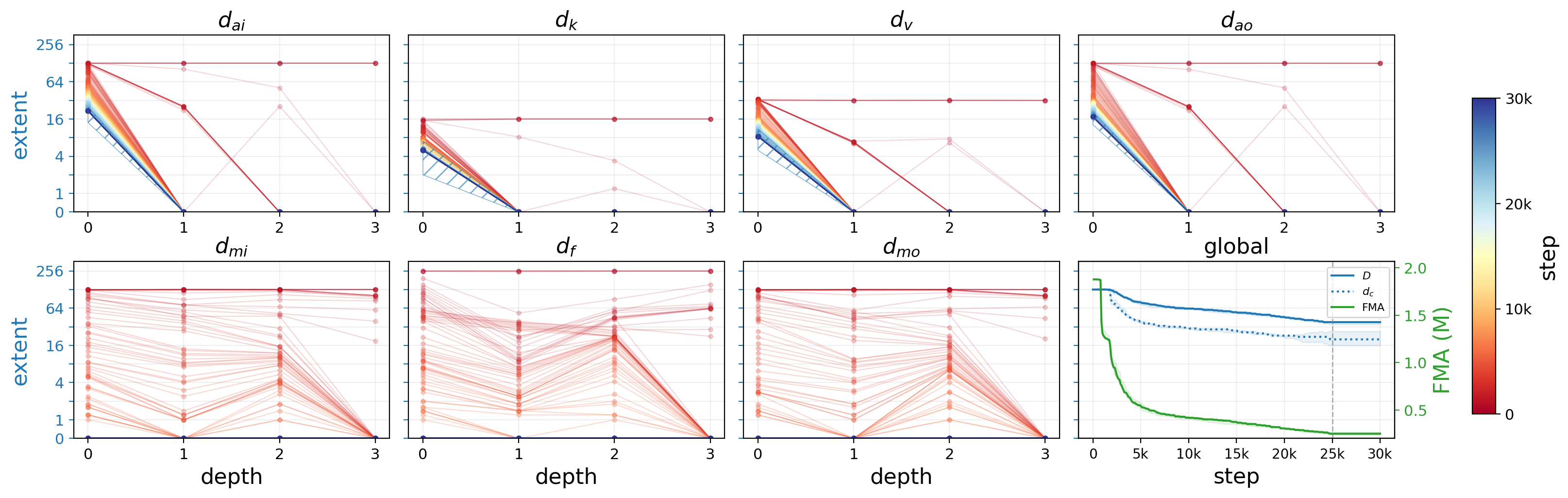}
\end{subfigure}
\begin{subfigure}{\textwidth}
\caption{No exchange (278k~FMA, $+37\%$).}
\label{fig:retriever_1b}
\centering
\includegraphics[width=\textwidth]{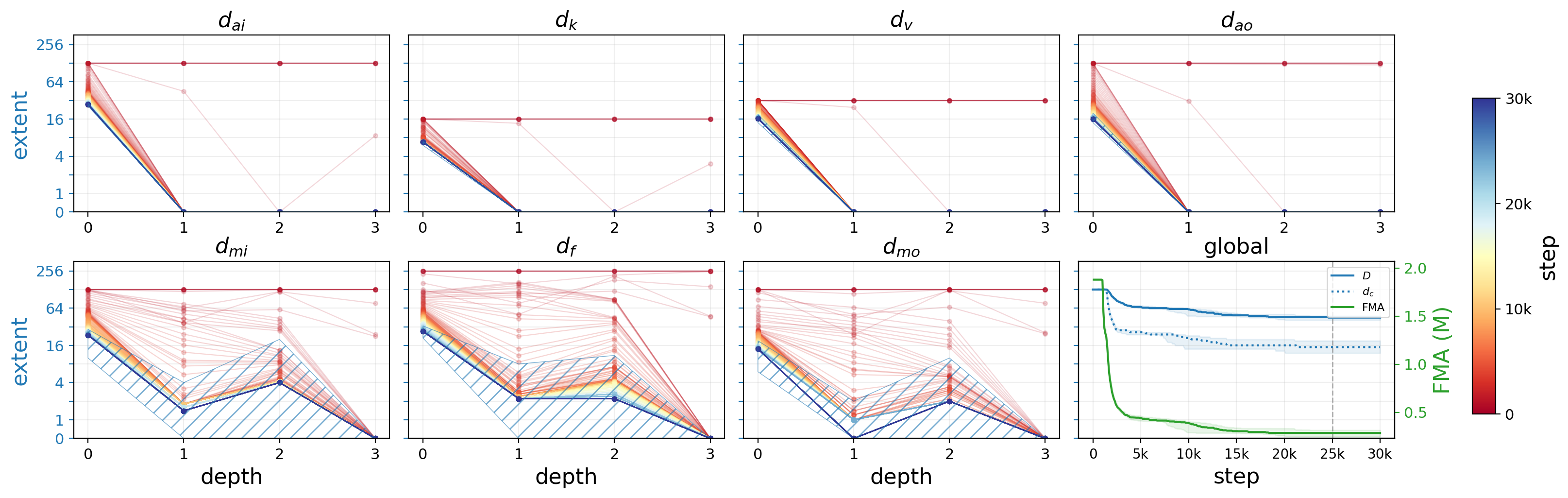}
\end{subfigure}
\caption{Retriever ablation profiles (part~1).
\textbf{(a)}~The baseline compresses $D$ from 128 to ${\sim}30$.
\textbf{(b)}~Without gauge Sinkhorn the optimizer alone cannot expose
channel importance quickly enough.
\textbf{(c)}~Exchange is the most impactful mechanism.}
\label{fig:retriever_part1}
\end{figure}

\begin{figure}[p]
\centering
\begin{subfigure}{\textwidth}
\caption{Sign-step $\rho$-solver (101k~FMA, unstable;
  band over the 4 completed seeds).}
\label{fig:retriever_2a}
\centering
\includegraphics[width=\textwidth]{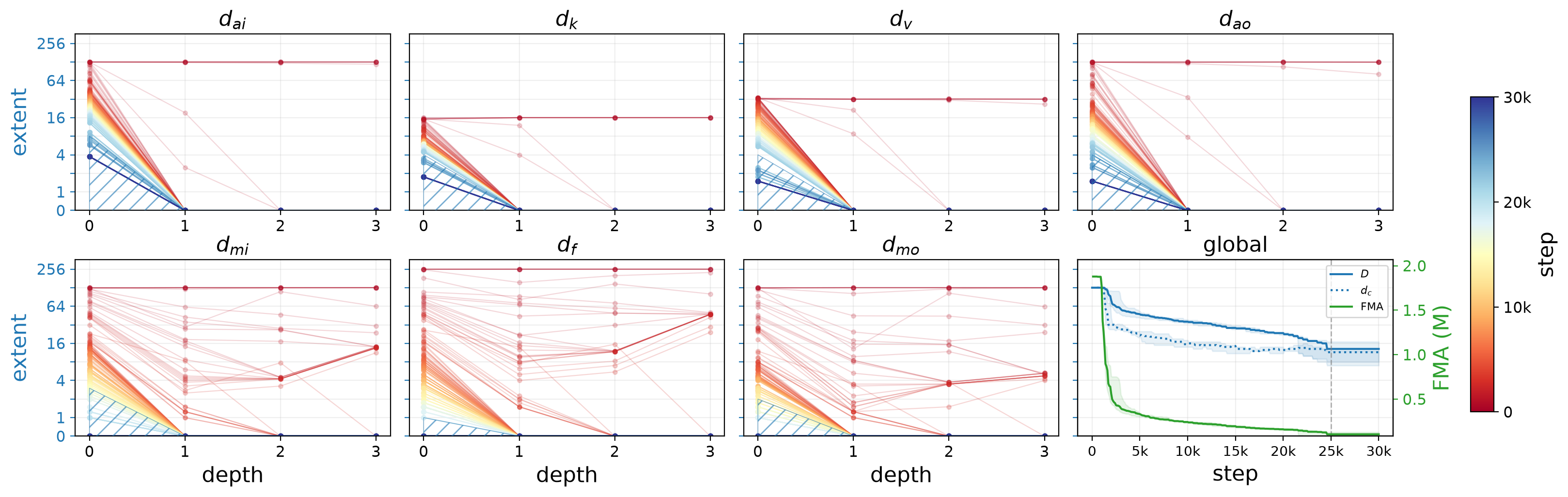}
\end{subfigure}
\begin{subfigure}{\textwidth}
\caption{Trust multiplier $k{=}4$ (199k~FMA).}
\label{fig:retriever_2b}
\centering
\includegraphics[width=\textwidth]{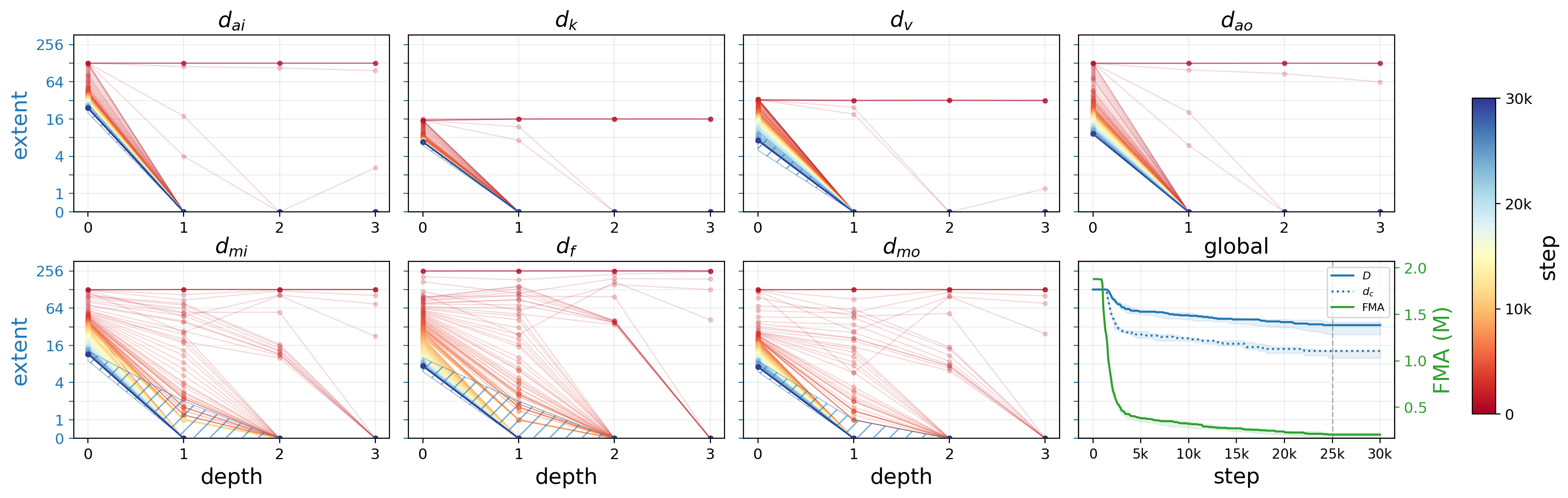}
\end{subfigure}
\begin{subfigure}{\textwidth}
\caption{Trust multiplier $k{=}16$ (206k~FMA).}
\label{fig:retriever_2c}
\centering
\includegraphics[width=\textwidth]{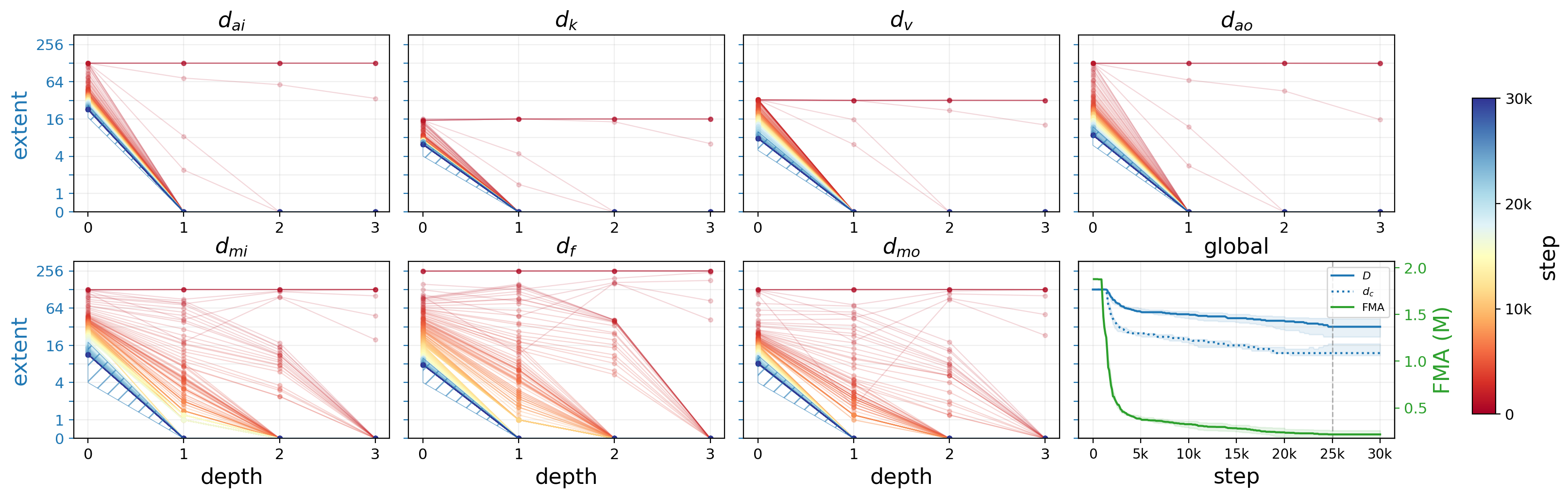}
\end{subfigure}
\caption{Retriever ablation profiles (part~2).
\textbf{(a)}~The sign-step solver over-penalizes and destroys the model
(loss~5.6, accuracy~25\%).
\textbf{(b,\,c)}~Raising the trust multiplier has minimal effect, so the
attractor is robust to the drop threshold.}
\label{fig:retriever_part2}
\end{figure}

\subsection{Post-Hoc Pruning Protocol}
\label{app:posthoc_details}

The post-hoc experiments in Section~\ref{sec:posthoc} follow a
three-phase pipeline using the same gauge-calibrated utility
metric as during-training compression.

\paragraph{FF31.}
Phase~A: train the full 8L/8H/$D{=}192$ model without penalty
for 40k~steps (30k training + 10k cosine cooldown), 5~seeds (42--46).
Phase~B: iteratively apply gauge Sinkhorn ($\tau{=}0.02$, 3~iterations),
remove the lowest-efficiency channel, and train for 10~corrective steps at LR$_{\min}$.
Stop when holdout loss exceeds $L^*{=}0.1$.
Phase~C: finetune for 10k~steps with cosine LR decay, no penalty.
Of 5~seeds, 2 learned the task in Phase~A (${\geq}99\%$ exact match);
the remaining 3 are excluded.

\paragraph{Character-level LM.}
Phase~A: train the full 12L/6H/$D{=}576$ model without penalty
for 10k~steps with cosine LR decay, 5~seeds (42--46).
Phase~B: prune to the baseline FMA target (7.6M) over
10~rounds of 100~corrective steps each (1k~total), dropping
to linearly interpolated FMA targets at each round.
Phase~C: finetune for 36k~steps with cosine LR decay, no penalty.
Total training budget: 47k~steps, matching the during-training runs.

\paragraph{From-scratch controls.}
Each from-scratch arm of Table~\ref{tab:posthoc} takes the structure
one compressed run discovered, re-initializes it, and trains it with
the penalty disabled at 5~seeds (42--46):
FF31 uses seed~43's structure at $L^*{=}0.005$, character-level LM
seed~42's at $L^*{=}\ln 3$, and retrieval seed~45's.
The spread across those five seeds is therefore training-seed variance
at a fixed architecture, not variance over architectures.
Three details qualify the character-level LM arm in particular.
It runs weight decay $0.001$ where both of its comparators ran~$0$;
decay is applied only on the Adam path, so it reaches the embedding,
positional-embedding and RoPE frequency tensors and shrinks them by
$1.8\%$ over the run.
Its realized budget is shorter than its configured cap because the
saturation trigger fires early: a control that starts already at the
target FMA meets both targets almost immediately, at step ${\sim}1.7$k
against ${\sim}16.9$k for its during-training comparator.
Each arm therefore enters an identical $30$k-step anneal --- same peak
rate, same warmup, same floor --- on first meeting the same FMA and
loss targets, rather than on an equal total budget.
Equal total steps would in fact favor the control, giving it
${\sim}15$k extra steps at peak rate on an architecture it never has to
change.
Both mismatches cut the same way, which is why the arm is reported as
consistent with parity rather than as parity.
On FF31, finally, the one control seed that learns the task is seed~43,
the donor of the structure; with a single seed that is an observation
and not a finding.

%
\section{Supplementary Results}
\label{app:supplementary_results}

Appendix~\ref{app:exp_details} is organized by task; this one is
organized by question, and reports results that cut across tasks or go
beyond what the main body has room for.

\subsection{Full-Population Results}
\label{app:full_population}

Table~\ref{tab:ff31}'s FF31 statistics are over converged seeds only.
Table~\ref{tab:ff31_population} lists all 5 seeds at $L^*{=}0.01$ and
all 25 at $L^*{=}0.005$, giving final exact match, final FMA, and
whether each seed met that convergence criterion
(exact match ${\geq}99\%$ and at least $2{\times}$ compression).
The failures are not uniform.
Three seeds --- 46 at the looser target, 42 and 63 at the tighter one
--- end at the initial 13.2M~FMA, having never compressed at all.
The remaining non-converged seeds did compress, into the same
52k--95k band as the converged ones, and finished below the
exact-match criterion.
A mechanism for that second class, and the corrected-schedule
population, are in Appendix~\ref{app:reliability}.

Table~\ref{tab:population} gives per-seed values for every other
reported condition.
Nothing is excluded there: the body tables for these tasks already
aggregate over the full population, and the per-seed values are listed
so the spread behind each mean is visible.
Its final group holds the three from-scratch controls of
Table~\ref{tab:posthoc}, each in its own task's metric.

\begin{table}[ht]
\centering
\caption{FF31 full population.
  Every seed at both loss targets, with the convergence criterion of
  Table~\ref{tab:ff31} applied per seed.
  Seeds ending at 13,180k~FMA never compressed;
  the initial architecture is 13.2M~FMA.}
\label{tab:ff31_population}
\scriptsize
\begin{tabular}{rcccccc}
\toprule
 & \multicolumn{3}{c}{$L^*=0.01$, 40k steps} & \multicolumn{3}{c}{$L^*=0.005$, 70k steps} \\
\cmidrule(lr){2-4}\cmidrule(lr){5-7}
Seed & Exact match & FMA & Conv. & Exact match & FMA & Conv. \\
\midrule
42 & $1.000$ & 63k & \checkmark & $0.025$ & 13180k &  \\
43 & $1.000$ & 63k & \checkmark & $1.000$ & 71k & \checkmark \\
44 & $1.000$ & 90k & \checkmark & $1.000$ & 89k & \checkmark \\
45 & $0.408$ & 95k &  & $1.000$ & 59k & \checkmark \\
46 & $0.031$ & 13180k &  & $1.000$ & 74k & \checkmark \\
47 & --- & --- & --- & $1.000$ & 72k & \checkmark \\
48 & --- & --- & --- & $1.000$ & 56k & \checkmark \\
49 & --- & --- & --- & $0.940$ & 83k &  \\
50 & --- & --- & --- & $1.000$ & 70k & \checkmark \\
51 & --- & --- & --- & $1.000$ & 77k & \checkmark \\
52 & --- & --- & --- & $0.084$ & 69k &  \\
53 & --- & --- & --- & $1.000$ & 70k & \checkmark \\
54 & --- & --- & --- & $0.994$ & 52k & \checkmark \\
55 & --- & --- & --- & $1.000$ & 76k & \checkmark \\
56 & --- & --- & --- & $1.000$ & 67k & \checkmark \\
57 & --- & --- & --- & $1.000$ & 72k & \checkmark \\
58 & --- & --- & --- & $0.035$ & 81k &  \\
59 & --- & --- & --- & $1.000$ & 85k & \checkmark \\
60 & --- & --- & --- & $1.000$ & 52k & \checkmark \\
61 & --- & --- & --- & $1.000$ & 64k & \checkmark \\
62 & --- & --- & --- & $0.045$ & 90k &  \\
63 & --- & --- & --- & $0.002$ & 13180k &  \\
64 & --- & --- & --- & $1.000$ & 81k & \checkmark \\
65 & --- & --- & --- & $0.416$ & 74k &  \\
66 & --- & --- & --- & $1.000$ & 66k & \checkmark \\
\midrule
 & \multicolumn{3}{c}{3 of 5 converged} & \multicolumn{3}{c}{18 of 25 converged} \\
\bottomrule
\end{tabular}

\end{table}

\begin{table}[ht]
\centering
\caption{Full population by seed, for every reported condition outside
  Table~\ref{tab:ff31_population}.
  Perplexity and loss are the values the body tables average;
  retrieval accuracy is read at step~30k, as in
  Table~\ref{tab:retriever-ablation}.
  In the last group all three controls train a single discovered
  architecture across five seeds, so their spread is training-seed
  variance at a fixed architecture
  (Appendix~\ref{app:posthoc_details}).}
\label{tab:population}
\scriptsize
\begin{tabular}{lccccc}
\toprule
Condition & 42 & 43 & 44 & 45 & 46 \\
\midrule
\multicolumn{6}{@{}l}{\emph{Character-level LM (perplexity)}} \\
Baseline & $2.668$ & $2.614$ & $2.657$ & $2.682$ & $2.639$ \\
$L^*{=}\ln 3$ & $2.634$ & $2.614$ & $2.628$ & $2.574$ & $2.599$ \\
$L^*{=}\ln 4$ & $2.731$ & $2.726$ & $2.728$ & $2.881$ & $2.658$ \\
24L, $L^*{=}\ln 2.5$ & $2.317$ & $2.374$ & $2.353$ & $2.307$ & $2.326$ \\
\midrule
\multicolumn{6}{@{}l}{\emph{CelebA masked autoencoding (loss)}} \\
Baseline & $0.236$ & $0.235$ & $0.238$ & $0.235$ & $0.237$ \\
$L^*{=}0.32$ & $0.238$ & $0.236$ & $0.237$ & $0.237$ & $0.238$ \\
$L^*{=}0.40$ & $0.251$ & $0.254$ & $0.251$ & $0.249$ & $0.251$ \\
$L^*{=}0.24$, $d_k{=}256$ & --- & $0.213$ & $0.213$ & $0.213$ & $0.213$ \\
\midrule
\multicolumn{6}{@{}l}{\emph{Retrieval (accuracy at step 30k)}} \\
Baseline & $0.9715$ & $0.9998$ & $0.9999$ & $0.9999$ & $0.9435$ \\
\midrule
\multicolumn{6}{@{}l}{\emph{Discovered architectures trained from fresh initialization}} \\
FF31 (exact match) & $0.002$ & $0.805$ & $0.000$ & $0.000$ & $0.000$ \\
Char LM (perplexity) & $2.670$ & $2.621$ & $2.647$ & $2.656$ & $2.603$ \\
Retrieval (accuracy) & $1.0000$ & $1.0000$ & $0.9996$ & $1.0000$ & $1.0000$ \\
\bottomrule
\end{tabular}

\end{table}

\subsection{Gauge-Component Ablation}
\label{app:gauge_ablation}

This section isolates which results depend on the penalty being
\emph{gauge-correct} rather than merely structured.
Three arms replace the symmetric penalty on the retrieval task,
holding the calibration, the trust-drop criterion and every schedule
fixed: a one-sided penalty, the same with gauge Sinkhorn enabled, and a
product-norm penalty.
Each is the in-vivo counterpart of a pathology argued in
Appendix~\ref{app:failure_modes}, so together they test
Propositions~\ref{prop:equilibrium}--\ref{prop:balance} empirically
rather than by construction.
Results are in Table~\ref{tab:gauge-ablation}.

\paragraph{Read compression, not accuracy.}
The one-sided arm reaches the \emph{highest} accuracy in the table,
$0.996$, and that is the failure rather than a success: it gets there by
not compressing.
At $442$k~FMA it sits $1.7{\times}$ above the matched symmetric control,
having converged to an architecture the penalty was supposed to shrink.
This is the gauge problem of Appendix~\ref{app:failure_modes} made
concrete: the network preserves its function by moving magnitude into
the unpenalized factor, so the measured penalty falls without any
structure being given up.
Charging both factors removes the escape
(Proposition~\ref{prop:balance}).

\paragraph{Gauge Sinkhorn cannot repair a one-sided penalty.}
The cleanest contrast in the table is between the two one-sided arms,
which differ in one flag and share seeds, schedule and every other
setting.
Enabling gauge Sinkhorn moves compression only from $442$k to $302$k
while accuracy collapses from $0.996$ to $0.368$.
Rebalancing and one-sided pressure work against each other: Sinkhorn
restores the balance that the penalty is simultaneously destroying, and
the network is driven through repeated rescalings rather than toward a
structure.
Gauge correctness is therefore a property of the penalty, not something
a rebalancing step can add afterwards.

\paragraph{Product-norm over-compresses past the point of quality.}
The product-norm arm compresses hardest --- $150$k~FMA, $0.6{\times}$
the control --- and loses the task doing it, at accuracy
$0.579 \pm 0.308$.
Appendix~\ref{app:failure_modes} predicts this: the product-norm
gradient on one factor grows without bound as the other factor shrinks,
so once a channel starts to go it is driven out faster than the loss can
object.
The spread is the diagnostic, not the mean --- the arm is bimodal across
seeds rather than uniformly worse.

Taken together, the symmetric penalty is the only one of the three that
compresses \emph{and} keeps the task, and the two failures are opposite:
one is evaded, the other unbounded.

\paragraph{Scope.}
The variant arms vary the data stream with the seed while the symmetric
reference runs vary only initialization.
The two generations only share a schedule through step~25{,}000 (when
the penalty freezes), which is why the table reports at step~25k
throughout.
The one-sided-with-Sinkhorn arm has no exactly matched control, since no
symmetric run used a $500$-step Sinkhorn cadence; the one-sided pair it
is contrasted against is exact.

\begin{table}[ht]
\centering
\caption{Gauge-component ablation on the retrieval task, at step~25k
  (mean $\pm$ std over 5~seeds).
  FMA ratios are against the matched symmetric control --- symmetric
  penalty, gauge Sinkhorn off --- which is the second row.
  Accuracy is reported before the cooldown, so it is lower than the
  converged figures in Table~\ref{tab:retriever-ablation};
  the comparison across arms is the point, not the level.}
\label{tab:gauge-ablation}
\small
\begin{tabular}{@{}llccc@{}}
\toprule
Penalty & Gauge Sinkhorn & Accuracy & Loss & FMA \\
\midrule
\multicolumn{5}{@{}l}{\emph{Symmetric reference runs}} \\
Symmetric & gS every 100 & $0.765 \pm 0.292$ & $1.657 \pm 1.981$ & 203k (0.8$\times$) \\
Symmetric & no gS & $0.980 \pm 0.040$ & $0.266 \pm 0.491$ & 255k \\
\midrule
\multicolumn{5}{@{}l}{\emph{Gauge-variant arms}} \\
One-sided & no gS & $0.996 \pm 0.007$ & $0.037 \pm 0.057$ & 442k (1.7$\times$) \\
One-sided & gS every 500 & $0.368 \pm 0.265$ & $5.265 \pm 1.913$ & 302k (1.2$\times$) \\
Product-norm & no gS & $0.579 \pm 0.308$ & $2.894 \pm 2.117$ & 150k (0.6$\times$) \\
\bottomrule
\end{tabular}

\end{table}

\subsection{An External During-Training Method}
\label{app:cofi}

Section~\ref{sec:posthoc} reports that our reimplementation of
CoFi~\citep{xia2022structured} stalls well short of the requested compression.
This section details the two runs and the controller state that explain
this outcome.

Both runs distill from the same teacher,
the full 12-layer character LM of Appendix~\ref{app:lm_setup} trained without penalty,
and both follow the same schedule of $5$k warmup, $40$k pruning and
$25$k finetuning steps;
they differ only in the measured resource objective.
One targets the FMA of our own compressed model, the other its parameter count.
Table~\ref{tab:cofi} reports each request and what it reached,
each in its own objective.
The parameter target misses by as much as the FMA target does,
at $2.05{\times}$ against $8.38{\times}$ requested.
Quality is not the binding constraint:
both arms finish within $0.1$ perplexity of the teacher.

Table~\ref{tab:cofi_controller} reads the controller at the last step of the pruning phase.
The coarse mask families (whole attention blocks, residual channels, and whole MLP blocks)
sit at a mean of $1.000$, meaning nothing at that granularity was ever given up.
All of the realized compression came from the two fine families.
After the full $40$k pruning steps both Lagrange multipliers are still
climbing at roughly ten units per thousand steps.
The phase ended with the controller still pushing, not at an equilibrium.
The last column of Table~\ref{tab:cofi} records the consequence for shape:
head width is untouched in all twelve sublayers.
The architectures we reach narrow key and value dimensions \emph{inside}
heads that stay alive (Table~\ref{tab:per_layer}), which the published mask
space cannot express.
This comparison therefore bounds what is reachable through those mask
families, not what the method could do with others.

\begin{table}[ht]
\centering
\caption{CoFi against the same compression target, one seed per arm.
  Parameter counts exclude the embedding tables, as the controller's own
  accounting does.
  Perplexity is held out, so it is not comparable with the last-report
  values used elsewhere.
  \emph{Full width} counts sublayers whose $d_k$ and $d_v$ still match
  the teacher's.}
\label{tab:cofi}
\small
\begin{tabular}{@{}lcccccc@{}}
\toprule
Model & Requested & Reached & FMA/tok & Parameters & Perplexity & Full width \\
\midrule
Teacher (no pruning) & --- & $1.00\times$ & 53.7M & 48.0M & $2.527$ & 12 of 12 \\
CoFi, FMA target & $7.09\times$ & $1.97\times$ & 27.3M & 23.5M & $2.622$ & 12 of 12 \\
CoFi, parameter target & $8.38\times$ & $2.05\times$ & 27.5M & 23.3M & $2.625$ & 12 of 12 \\
\bottomrule
\end{tabular}

\end{table}

\begin{table}[ht]
\centering
\caption{CoFi controller state at the last step of the pruning phase.
  Mask values are means per family: MHA and FFN gate whole sublayers,
  hidden the residual channels, head the attention heads, intermediate
  the MLP units.
  $\Delta\lambda_1$ is the change in the first multiplier over the final
  $5{,}000$ steps, per thousand steps.}
\label{tab:cofi_controller}
\small
\begin{tabular}{@{}lcccccccc@{}}
\toprule
 & \multicolumn{5}{c}{Mean mask value by family} & \multicolumn{3}{c}{Lagrange multipliers} \\
\cmidrule(lr){2-6}\cmidrule(lr){7-9}
Arm & MHA & Head & Hidden & FFN & Interm. & $\lambda_1$ & $\lambda_2$ & $\Delta\lambda_1$/1k \\
\midrule
CoFi, FMA target & $1.000$ & $0.639$ & $1.000$ & $1.000$ & $0.412$ & $369$ & $389$ & $+9.9$ \\
CoFi, parameter target & $1.000$ & $0.694$ & $1.000$ & $1.000$ & $0.380$ & $368$ & $389$ & $+9.8$ \\
\bottomrule
\end{tabular}

\end{table}

\subsection{Reliability and Schedule Sensitivity}
\label{app:reliability}

Compression pressure is gated on the loss target: until the loss reaches
$L^*$ the penalty never engages and no channel is removed.
On FF31 the model passes a plateau before reaching the target, so the
time taken to escape it decides whether compression begins at all.
This section characterizes that coupling and reports the population at
the corrected schedule.

To separate the schedule from the method we repeated $L^*{=}\ffrelTarget$
at half the learning rate over the same \ffrelSeeds~seeds.
Table~\ref{tab:ff31_reliability} gives both populations.
These trials reveal two modes of failure, both related to the loss target.
A seed that never reaches $L^*$ cannot compress, thus ending training
at its initial FMA.
A seed may also begin compressing normally, but then fail to maintain the loss target
or pass the exact match condition.
At $\eta{=}10^{-3}$, \ffrelPubNeverComp~seeds fail to start and
\ffrelPubDegraded~fail to hold; at $\eta{=}5{\times}10^{-4}$ all seeds begin compression
and \ffrelHalfDegraded~seeds still fail to hold.

On this architecture, halving the learning rate results in more efficient
early descent and, thus, earlier compression.
The median falls from \ffrelPubReachMedian\ steps to \ffrelHalfReachMedian,
and the range from [\ffrelPubReachMin, \ffrelPubReachMax]
with \ffrelPubNeverReached~seeds never starting,
to [\ffrelHalfReachMin, \ffrelHalfReachMax] with no lost starts
($p\,\ffrelReachP$).
That count does not change ($p{=}\ffrelDegradedFisherP$).
Convergence rises slightly from \ffrelPubConverged\ to \ffrelHalfConverged\ of
\ffrelSeeds, a difference these populations cannot resolve
($p{=}\ffrelRateFisherP$) and which we do not read as a reliability gain.
Further, the converged runs achieve lower compression at the smaller learning rate,
\ffrelPubCompression$\times$ against \ffrelHalfCompression$\times$ ($p{=}\ffrelConvFMAP$).

These experiments highlight how compression is a dynamical process whose
outcome depends on everything that governs when and the extent to which
the loss falls below the target, including initialization, data order,
and learning rate. 
FF31 is sensitive because of the very strict loss target and accuracy criterion.

\begin{table}[htbp]
\centering
\caption{FF31 reliability at $L^*{=}\ffrelTarget$, the same \ffrelSeeds~seeds
  at two learning rates.
  Compression is the initial \ffrelInitialFMA~FMA over each class's median
  final FMA, with $[\min,\max]$ over the class.
  Convergence is exact match ${\geq}\,0.99$ with at least $2\times$
  compression, as in Table~\ref{tab:ff31}.}
\label{tab:ff31_reliability}
\scriptsize
\begin{tabular}{@{}lcccccc@{}}
\toprule
 & \multicolumn{3}{c}{$\eta = 10^{-3}$} & \multicolumn{3}{c}{$\eta = 5 \times 10^{-4}$} \\
\cmidrule(lr){2-4}\cmidrule(lr){5-7}
Outcome & Runs & Compression & Final loss & Runs & Compression & Final loss \\
\midrule
Never compressed & 2/25 & $1\times$ & $0.0811$--$0.0949$ & 0/25 & --- & --- \\
Compressed, converged & 18/25 & $187\times$ [148, 255] & $\leq 0.0029$ & 20/25 & $176\times$ [129, 208] & $\leq 0.0048$ \\
Compressed, not converged & 5/25 & $162\times$ [146, 191] & $0.0004$--$0.0132$ & 5/25 & $193\times$ [148, 224] & $0.0038$--$0.0163$ \\
\midrule
Steps to first reach $L^*$ & \multicolumn{3}{c}{2400 [1700, 17100]; 2 never} & \multicolumn{3}{c}{1700 [1200, 3300]; 0 never} \\
Accepted by final loss $\leq L^*$ & \multicolumn{3}{c}{19 accepted, 18 converged} & \multicolumn{3}{c}{21 accepted, 20 converged} \\
Non-converged seeds & \multicolumn{3}{c}{42, 63 (never); 49, 52, 58, 62, 65} & \multicolumn{3}{c}{51, 57, 60, 62, 63} \\
\bottomrule
\end{tabular}

\end{table}

\end{document}